\theoremstyle{plain}
\newtheorem{theorem}{Theorem}[section]
\newtheorem{lemma}[theorem]{Lemma}
\newtheorem{corollary}[theorem]{Corollary}
\newtheorem{example}[theorem]{Example}
\theoremstyle{definition}
\newtheorem{definition}[theorem]{Definition}
\newtheorem{assumption}[theorem]{Assumption}
\theoremstyle{remark}
\newtheorem{remark}[theorem]{Remark}
\newcommand\Autoref[1]{\@first@ref#1,@}
\def\@throw@dot#1.#2@{#1}
\def\@set@refname#1{
    \edef\@tmp{\getrefbykeydefault{#1}{anchor}{}}%
    \xdef\@tmp{\expandafter\@throw@dot\@tmp.@}%
    \ltx@IfUndefined{\@tmp autorefnameplural}%
         {\def\@refname{\@nameuse{\@tmp autorefname}s}}%
         {\def\@refname{\@nameuse{\@tmp autorefnameplural}}}%
}
\def\@first@ref#1,#2{%
  \ifx#2@\autoref{#1}\let\@nextref\@gobble
  \else%
    \@set@refname{#1}
    \@refname~\ref{#1}
    \let\@nextref\@next@ref
  \fi%
  \@nextref#2%
}
\def\@next@ref#1,#2{%
   \ifx#2@ and~\ref{#1}\let\@nextref\@gobble
   \else, \ref{#1}
   \fi%
   \@nextref#2%
}
\let\oldtheequation\theequation
\renewcommand\tagform@[1]{\maketag@@@{\ignorespaces#1\unskip\@@italiccorr}}
\renewcommand\theequation{(\oldtheequation)}
\newcommand{\real}{\mathbb{R}}  
\def\E{\mathbb{E}} 
\def\P{\mathbb{P}} 
\newcommand\numberthis{\addtocounter{equation}{1}\tag{\theequation}}  
\renewcommand{\hat}{\widehat}  
\renewcommand{\tilde}{\widetilde}
\DeclarePairedDelimiterX{\set}[1]{\{}{\}}{\,#1\,}  
\DeclarePairedDelimiterX{\norm}[1]{\lVert}{\rVert}{#1} 
\DeclarePairedDelimiterX{\bnorm}[1]{\biggl\lVert}{\biggr\rVert}{#1} 
\DeclarePairedDelimiterX{\abs}[1]{\lvert}{\rvert}{#1} 
\DeclarePairedDelimiterX{\ip}[2]{\langle}{\rangle}{#1,#2} 
\DeclareMathOperator*{\argmin}{\arg\min}
\DeclareMathOperator*{\argmax}{\arg\max}
\DeclareMathOperator*{\var}{var}  
\newcommand{\RNum}[1]{\uppercase\expandafter{\romannumeral #1\relax}} 
\newcommand*{\indep}{%
  \mathbin{%
    \mathpalette{\@indep}{}%
  }%
}
\newcommand*{\nindep}{%
  \mathbin{
    \mathpalette{\@indep}{/}%
  }%
}
\newcommand*{\@indep}[2]{%
  \sbox0{$#1\perp\m@th$}
  \sbox2{$#1=$}
  \sbox4{$#1\vcenter{}$}
  \rlap{\copy0}
  \dimen@=\dimexpr\ht2-\ht4-.2pt\relax
  \kern\dimen@
  \ifx\\#2\\%
  \else
    \hbox to \wd2{\hss$#1#2\m@th$\hss}%
    \kern-\wd2 %
  \fi
  \kern\dimen@
  \copy0 
}
\newcommand{\din}{d_{\texttt{in}}}
\newcommand{\defeq}{:=}
\algrenewcommand\algorithmicensure{\textbf{Output:}}
\title{Cer-Eval: Certifiable and Cost-Efficient Evaluation Framework for LLMs}
\author{%
  Ganghua~Wang \\
  Data Science Institute\\
  University of Chicago\\
  \texttt{gangahu@uchicago.edu} \\
  \And
  Zhaorun~Chen \\
  Department of Computer Science\\
  University of Chicago \\
  \texttt{zhaorun@uchicago.edu} \\
  \AND
  Bo~Li \\
  Department of Computer Science\\
  UIUC \& University of Chicago\\
  \texttt{bol@uchicago.edu} \\
  \And
  Haifeng~Xu \\
  Department of Computer Science\\
  University of Chicago \\
  \texttt{haifengxu@uchicago.edu} \\
}
\begin{document}

\maketitle

\newcommand{\ours}{Cer-Eval\xspace}

\begin{abstract}
As foundation models continue to scale, the size of trained models grows exponentially, presenting significant challenges for their evaluation. Current evaluation practices involve curating increasingly large datasets to assess the performance of large language models (LLMs). However, there is a lack of systematic analysis and guidance on determining the sufficiency of test data or selecting informative samples for evaluation.
This paper introduces a certifiable and cost-efficient evaluation framework for LLMs. Our framework adapts to different evaluation objectives and outputs confidence intervals that contain true values with high probability.
We use ``test sample complexity'' to quantify the number of test points needed for a certifiable evaluation and derive tight bounds on test sample complexity. 
 Based on the developed theory, we develop a partition-based algorithm, named \emph{\ours}, that adaptively selects test points to minimize the cost of LLM evaluation. Real-world experiments demonstrate that \ours can save 20\% to 40\% test points across various benchmarks, while maintaining an estimation error level comparable to the current evaluation process and providing a 95\% confidence guarantee.
\end{abstract}

\section{Introduction}\label{sec:intro}
    In recent years, large-language-models (LLMs) have exhibited astonishing capabilities in natural language processing. Evaluating LLMs in terms of their performance and trustworthiness therefore is crucial for understanding their strengths and limitations, guiding their development, and ensuring responsible deployment~\citep{chang2024survey}. Numerous benchmark datasets have been created to assess different aspects of LLM performance. For example, the Massive Multitask Language Understanding (MMLU) dataset~\citep{hendrycks2021measuring} evaluates the knowledge and problem-solving abilities of LLMs across multiple fields such as elementary math, law and history, identifying areas in which an LLM is inferior to humans; TrustGPT~\citep{huang2023trustgpt} is proposed to assess the potential of LLMs generating toxic or hateful contents, while PromptBench~
    \citep{zhu2023promptbench} and MMDT~\citep{xu2025mmdt} test the vulnerability of LLMs to adversarial prompts that could lead to misleading or unsafe responses.

    Despite the increasing number of benchmark datasets, little attention has been paid to the evaluation process itself. The current practice, which we call the static evaluation process, is simply reporting the average score over the entire test dataset. This is the method used by widely adopted platforms such as Gen AI on Vertex AI platform by Google, the open LLM leaderboard hosted by Huggingface, and the Evals framework by OpenAI. 
    
    However, this static evaluation approach has two major drawbacks. 
    First, it does not quantify or guarantee the reliability of the result. Here, reliability means how close the evaluation result is to the truth and how confident its conclusion is. 
    In particular, there are two sources contributing to the uncertainty in the evaluation results: the randomness in the model responses, and the randomness in the dataset used for evaluation. 
    The lack of reliability imposes difficulty in drawing a trustworthy conclusion and further tasks such as model comparison. 

    Second, it is not sample-efficient and does not adapt to various evaluation scenarios. The static evaluation has to evaluate all test points, making it expensive and time-consuming, as LLMs typically have a numerous number of parameters. However, in many cases, evaluating a subset of the dataset would suffice to reach a reliable conclusion. For example, if an LLM consistently performs poorly on a randomly selected subset of a question-answering (QA) benchmark, we can confidently conclude that this model's QA capability is below random guessing. 
    Moreover, for users who want to evaluate the model in a dynamic and evolving manner, the reliability of static evaluation process is further compromised.
    For instance, when new data points are introduced over time, we will need to re-evaluate the model to accurately reflect its performance. However, for a static process, the chance of drawing at least one wrong conclusion will approach one with repeated evaluations. 
    
    To address these challenges, we focus on two fundamental but underexplored problems in LLM evaluation: for a given LLM, test dataset, and evaluation metric,

    (P1) How to design an algorithm that adapts to different evaluation scenarios and goals and provides a certifiable guarantee for its result?

    (P2) How to strategically select test points to minimize evaluation cost while achieving a desired conclusion, and what is the minimum number of test points needed?

    To answer these questions, we propose a certifiable online evaluation process that sequentially refines evaluation results until a user-defined estimation error and confidence level is reached, e.g., the difference between the estimation and true performance is below 0.01 with 95\% probability;
    otherwise, the algorithm will notify the user that additional data points are needed for the desired estimation error and confidence level. 
    Beyond early stopping, our approach reduces evaluation costs by strategically selecting test points.
    We propose and study a concept named \emph{test sample complexity}, which quantifies the minimal number of test points needed to achieve an accurate and confident conclusion. 
    Inspired by the analysis of test sample complexity, we develop an online evaluation algorithm, \ours, which dynamically partitions the input space into regions of low variance and high probability mass. This allows the evaluation to focus on informative test points, significantly reducing the number of samples needed.
    Our contributions are summarized as follows:

    \textbf{1.} We introduce an online evaluation framework for LLMs that provides statistical guarantees on evaluation results. Unlike static evaluation, our approach applies to various evaluation goals and ensures validity within a user-specified estimation error and confidence level. 

    \textbf{2.} We propose the concept of test sample complexity, which characterizes the number of test points required to evaluate a model to a desired level. 
        Both upper and lower bounds are established for test sample complexity when the only assumption is a bounded loss function. We also show that test sample complexity can be greatly reduced when certain distributional assumptions hold. 

    \textbf{3.} Based on the developed theory, we propose \ours (outlined in \Autoref{fig:alg}), an adaptive evaluation algorithm that minimizes the evaluation cost through dynamic dataset partitioning.  
        Compared to the static evaluation baseline, \ours uses only $30\%\sim50\%$ data points and achieves a comparable evaluation accuracy in our simulation studies.
        When applied to real-world benchmarks (MMLU, AlpacaEval, and MATH) to evaluate GPT-4o, \ours achieves the same evaluation accuracy using only $60\%\sim80\%$ of the test data.

    The rest of paper is organized as follows. \Autoref{sec:lit} introduces the related literature. \Autoref{sec:formulation} formulates the problem setup and defines test sample complexity. \Autoref{sec:general} provides test sample complexity bounds for general cases, while \Autoref{sec:var} presents how to improve those bounds when distributional assumptions on the model and task are met, and proposes adaptive evaluation algorithms motivated by the developed theory. \Autoref{sec:exp} conducts extensive experiments on the proposed algorithm compared to the baseline. We conclude the paper in \Autoref{sec:con}.

\begin{figure*}[t]
    \centering
    \includegraphics[width=\linewidth]{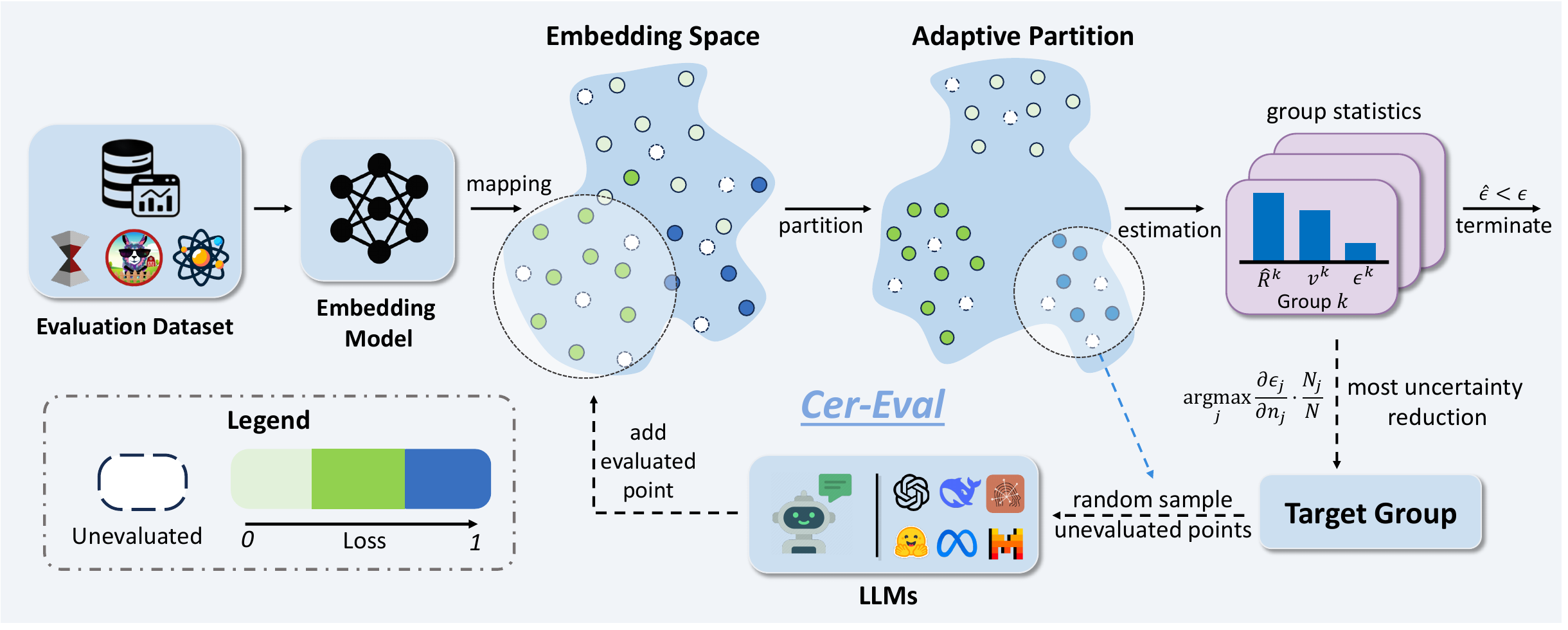}
    \caption{Overview of \ours, a partition-based adaptive evaluation algorithm. It iterates through four steps until the termination condition is met: (1) partition the dataset based on the evaluated points to minimize evaluation uncertainty, (2) compute summary statistics for each partition, (3) identify the partition that reduces uncertainty the most, and (4) sample and evaluate a new test point from the selected partition.}
    \label{fig:alg}
    \vspace{-0.2in}
\end{figure*}
    
\section{Related Work}\label{sec:lit}

\textbf{LLM evaluation.} 
    Existing literature of LLM evaluation primarily focuses on (1) discussing what aspects of LLM capability should be evaluated~\citep{liu2023trustworthy,chang2024survey,gao2024llm} and (2) proposing appropriate datasets and criterion to assess LLM performance~\citep{hendrycks2021measuring, lin-etal-2022-truthfulqa, chiang2024chatbot, zhang-etal-2024-safetybench, dubois2024length, chen2025shieldagent, chen2024mj, chen2024safewatch}. 
    However, relatively little attention has been given to the evaluation process itself. The standard practice for LLM evaluation remains simple: computing the average score over a test dataset based on a selected evaluation metric. Recently, \citep{miller2024adding} proposed to add error bars to quantify evaluation uncertainty, and \citep{chiang2024chatbot} constructs an approximate confidence level using bootstrapping. Nevertheless, these methods are empirical and lack valid statistical guarantees in finite-sample scenarios. As a result, the reliability of current evaluation practices is not formally ensured. Addressing this gap is one of the key contributions of our work.

\textbf{Efficient evaluation.}
    Researchers have recognized that evaluating LLMs on the full dataset can be computationally expensive. To mitigate this, \citep{polo2025tiny, kipnis2024texttt, xu2024data} proposed to choose a representative subset of data points to approximate the full evaluation result; \citep{boyeau2024autoeval, fisch2024stratified} adopted a stratified sampling technique to improve evaluation accuracy; and \citep{zhang2024collaborative} proposed to predict model performance using historical performance trends of similar models and tasks.
    However, these approaches also lack formal guarantees on their evaluation results. 
    Additionally, they focus solely on the test dataset without leveraging the properties of the model being evaluated. In contrast, our proposed method is adaptive to both the test dataset and the model, leading to efficient evaluations with guarantees.

\textbf{Sequential hypothesis testing.}
    Our proposed evaluation process involves a sequential selection of test points and decision making, and we certify our evaluation results by constructing a sequence of confidence intervals (CI) covering the truth with high probability. 
    The problem of constructing valid sequential CIs has been studied in the literature~\citep{farrell1964asymptotic, karp2007noisy,zhao2016adaptive}, with key techniques relying on classical Hoeffding-type concentration inequalities and a union-bound argument. 
    However, existing methods do not incorporate model- and dataset-specific structure, which we find crucial for improving evaluation efficiency.
    By incorporating a Bernstein-type concentration inequality and a partition-based approach, we prove that the needed test sample size can be greatly reduced under certain conditions.

\vspace{-0.1in}

\section{Problem Formulation}\label{sec:formulation}
    Given a trained LLM $f: X \in \mathcal{X} \to Y \in \mathcal{Y}$, we aim to evaluate its performance on a given task. We assume that the input space $\mathcal{X} = \real^{\din}$ consists of the embedding vector of tokens with maximum length $\din$, while the output space $\mathcal{Y}$ is general. The model $f$ can be either deterministic or non-deterministic. 
    Throughout this paper, we consider a fixed evaluation task characterized by a joint distribution $P_{XY}$ over $(X, Y)$. 
    The model performance in task $P_{XY}$ is quantified by its prediction error $R(f, P_{XY}) \defeq \E_{(X, Y) \sim P_{XY}} \ell(f(X),Y)$ for some loss function $\ell$. Examples of $\ell$ include the zero-one loss for multiple-choice tasks, similarity-based metrics for natural language understanding tasks, or human- or LLM-based scores for reasoning tasks. For notational simplicity, we denote $R(f, P_{XY})$ as $R$ when there is no ambiguity.  

    In practice, model performance is often assessed on a given test dataset $D_n=\{(X_i,Y_i), i=1,\dots,n\}$. In this case, we assume that $D_n$ represents the underlying task distribution $P_{XY}$. That is, data points in $D_n$ are independently and identically (IID) drawn from $P_{XY}$. A concrete example of LLM evaluation is provided below. 

    \begin{example}
        Massive Multitask Language Understanding (MMLU) dataset~\citep{hendrycks2021measuring} includes more than 15,000 multiple choice questions covering 57 subfields. Each question has four answer choices, with only one being correct. Researchers evaluate LLMs' natural language understanding capability on this dataset by averaging the prediction accuracy across all subfields. In this case, the output space is $\{1,2,3,4\}$, and a zero-one loss is used as the evaluation metric. It is found that a basline human achieves an average accuracy of 34.5\%, and many LLMs perform near-randomly.  
    \end{example}

    \textbf{Evaluation Goal and Process.} 
    The goal of model evaluation is to obtain an \textbf{accurate} and \textbf{confident} estimation of $R$. Here, accuracy means how close the estimation is to the truth, and confidence means the probability that our claim is correct. The user can specify a desired confidence level $1-\delta$ ($\delta$ is also known as the failure probability) and estimation error level $\epsilon$. Two common evaluation goals are:
    
    (1) Estimate $R$ within an error level of $\epsilon=0.01$ with $95\%$ confidence. 
    
    (2) Determine whether $R$ exceeds a threshold (e.g., 0.5) with $95\%$ confidence. 
    
    Therefore, a certifiable evaluation algorithm should provide an estimate of $R$ along with a confidence interval (CI) of radius $\epsilon$, which is guaranteed to contain the true error $R$ with probability at least $1-\delta$. Notably, the second goal above is equivalent to obtaining an estimate with $\epsilon$ implicitly determined by $R$ and the threshold. Moreover, users may wish to dynamically adjust $\epsilon$ during the evaluation process. 
    The current practice, which we call a static evaluation process, requires evaluating all test data at once, described below. 
        \begin{definition}[Static evaluation process]
        A static evaluation process tests all data points in a test dataset $D_n$, and output a single estimated prediction error and CI. 
    \end{definition}
    
    Static evaluation
    is unable to handle an implicitly defined or flexible $\epsilon$. To address this issue, we propose an online evaluation framework, as defined below. 
    \newcommand{\nalg}{N}
    \begin{definition}[Online evaluation process]
        An online evaluation algorithm $\mathcal{A}$ sequentially selects test points until the desired evaluation error level $\epsilon$ is achieved, or all available test data points are used. 
        The number of \textit{evaluated} test points at termination is denoted as $\nalg$.
    \end{definition}

    \begin{definition}[$(n,\epsilon,\delta)$-certified evaluation algorithm]
        An algorithm $\mathcal{A}$ is called $(n,\epsilon,\delta)$-certified if 
        \begin{align*}
            \P(\nalg \geq n \text{, or $\mathcal{A}$ produce at least one CI}  \text{ that does not contain truth} ) \leq \delta.
        \end{align*}
    \end{definition}

    \begin{remark}[Practical meaning of certified algorithms]
    For example, when the user's goal is estimating the model performance $R$, an estimation obtained from an $(n,\epsilon,\delta)$-certified algorithm is guaranteed to be within $\epsilon$ of the true value with probability at least $1-\delta$.
    \end{remark}
    
    \newcommand{\ntest}{n^*}

   \begin{definition}[Test sample complexity]
      Consider any algorithm $\mathcal{A}$ that is $(n,\epsilon,\delta)$-certified for evaluating a model $f$ on task $P_{XY}$. 
      Test sample complexity $\ntest \defeq n(\epsilon,\delta,f,P_{XY})$ is the smallest $n$ over all possible choices of algorithms $\mathcal{A}$.   
    \end{definition}  

    

    A $(n,\epsilon,\delta)$-certified test algorithm provides guarantees on how confident the evaluation result is and how many test points are needed for this algorithm. Test sample complexity, which is the minimal required number of test points to draw a confident and accurate conclusion, further characterizes the fundamental difficulty of evaluating a model on a given task. 
    The subsequent sections are dedicated to obtain bounds on the test sample complexity, and propose efficient evaluation algorithms, thereby answering the core research problems introduced in \Autoref{sec:intro}. 



    
    \begin{remark}
        The static evaluation process can be regarded as a special online algorithm that only yields a result after evaluating all $n$ points. 
    \end{remark}

    \vspace{-0.1in}

\section{Intrinsic Limits of Test Sample Complexity}\label{sec:general}
    In this section, we establish matching upper and lower bounds on test sample complexity, assuming only that the loss function is bounded. \Autoref{thm:no} indicate the fundamental limit on the number of test points needed for a certifiable evaluation across general models and tasks. 
    
    \begin{assumption}[Bounded loss]\label{asmp:bound}
        We assume that the loss function $\ell$ is bounded. Without loss of generality, let $0 \leq \ell(f(X), Y) \leq 1.$
    \end{assumption}

    \newcommand{\diff}{\epsilon}
    \begin{theorem}\label{thm:no}
        Let $\epsilon>0$ be the desired estimation error level and $0<\delta<1$ be the failure probability. 
        Under Assumption~\ref{asmp:bound}, we have the following results:
        \begin{itemize}
        \item (Upper Bound) There exists an $(n,\epsilon,\delta)$-certified online evaluation process (Algorithm~\ref{alg:seq} in Appendix C)  with
        $$n \leq  O\biggl(\frac{\ln(1/\delta)+\ln\ln(1/\diff)}{\diff^{2}} \biggr).$$
        \item (Matching Lower Bound) For any function $n'(\epsilon,\delta,f,P_{XY})$ such that  
        \begin{align*}
        \lim_{\diff,\delta \to 0} n'(\epsilon,\delta,f,P_{XY})  \frac{\diff^{2}}{\ln(1/\delta)+\ln\ln(1/\diff)} > 0,
    \end{align*}
    no algorithm can be $(n', \epsilon, \delta)$-certified for all sufficiently small $\diff$ and $\delta$.
        \end{itemize}
    \end{theorem}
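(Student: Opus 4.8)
The plan is to read the problem as sequential mean estimation: under Assumption~\ref{asmp:bound} the per-sample losses $Z_i \defeq \ell(f(X_i),Y_i)$ are i.i.d.\ on $[0,1]$ with unknown mean $R$, and a certified online algorithm is exactly a rule that builds a confidence sequence for $R$ and stops as soon as its half-width first drops below $\epsilon$. Both bounds then concern how quickly such a confidence sequence can shrink while remaining valid at \emph{every} (random) stopping time.

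For the upper bound I would construct an explicit anytime-valid confidence sequence and halt at radius $\epsilon$. Writing $\hat R_t = t^{-1}\sum_{i\le t} Z_i$, I seek radii $r_t$ with $\P(\exists t:\ |\hat R_t - R|>r_t)\le\delta$, obtained by a peeling/stitching argument: split time into geometric epochs $[2^k,2^{k+1})$, apply a maximal version of Hoeffding's inequality (via Doob's inequality) on each epoch with failure budget $\delta_k \propto \delta/(k{+}1)^2$ so that $\sum_k \delta_k\le\delta$. This yields $r_t \asymp \sqrt{(\ln(1/\delta)+\ln\ln t)/t}$, since within epoch $k$ one has $\ln(1/\delta_k)\asymp \ln(1/\delta)+\ln\ln t$. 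Algorithm~\ref{alg:seq} then reports $[\hat R_t-\epsilon,\hat R_t+\epsilon]$ and stops at the first $t$ with $r_t\le\epsilon$; the epoch union bound (equivalently Ville's inequality for the associated supermartingale) certifies all emitted CIs simultaneously with probability $1-\delta$. Inverting $r_t\le\epsilon$ gives the stopping time $N=O\!\big((\ln(1/\delta)+\ln\ln(1/\epsilon))/\epsilon^2\big)$, because $t\asymp 1/\epsilon^2$ forces $\ln\ln t\asymp\ln\ln(1/\epsilon)$.

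For the matching lower bound I would exhibit a worst-case instance, taking $P_{XY}$ so that $Z\sim\mathrm{Bernoulli}(1/2)$ (the maximal-variance bounded loss), and prove two separate $\Omega$ bounds whose maximum recovers the rate. The $\ln(1/\delta)/\epsilon^2$ term follows from a standard binary-testing reduction: at its stopping time a certified algorithm must separate two Bernoulli alternatives with means $1/2\pm\epsilon$, so a change-of-measure / Bretagnolle--Huber argument forces $\Omega(\ln(1/\delta)/\epsilon^2)$ samples. The hard part will be the $\ln\ln(1/\epsilon)/\epsilon^2$ term, which is the genuinely sequential ingredient: it is a converse law-of-the-iterated-logarithm phenomenon, since $|\hat R_t - R|$ exceeds $\sqrt{c\,(\ln\ln t)/t}$ infinitely often almost surely, so any rule that terminates before $t\asymp(\ln\ln(1/\epsilon))/\epsilon^2$ must output a radius-$\epsilon$ CI that misses $R$ with probability exceeding $\delta$. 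I would make this rigorous through a Robbins/Farrell-type change of measure over a family of nearby means rather than a single alternative—the same mechanism behind the sequential-CI lower bounds of \citep{farrell1964asymptotic}—because the one-alternative testing bound is blind to the iterated-logarithm correction.

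Combining the two pieces bounds the test sample complexity below by a constant multiple of $(\ln(1/\delta)+\ln\ln(1/\epsilon))/\epsilon^2$, matching the upper bound up to constants. Throughout, the delicate point is controlling the data-dependent stopping time and the confidence radius \emph{simultaneously}: this is precisely where the $\ln\ln$ slack is unavoidable on the lower side and must be paid for on the upper side, and getting both analyses to pin down the same rate is the crux of the argument.
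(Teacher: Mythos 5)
Your proposal is correct and takes essentially the same route as the paper: the upper bound via an anytime-valid confidence sequence of radius $\asymp\sqrt{(\ln(1/\delta)+\ln\ln t)/t}$ stopped once it falls below $\epsilon$ --- your geometric-epoch peeling with budget $\delta_k\propto\delta/(k+1)^2$ is precisely the proof of the adaptive Hoeffding inequality (Lemma~\ref{lemma:adap_h}, \citep{zhao2016adaptive}) that the paper invokes as a black box --- and the lower bound by pairing a two-point testing argument for the $\ln(1/\delta)/\epsilon^2$ term (the paper instead cites \citep{mannor2004sample}) with a Farrell-type change of measure over a family of nearby means for the $\ln\ln(1/\epsilon)/\epsilon^2$ term \citep{farrell1964asymptotic}. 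The only step the paper makes explicit that your sketch elides is the bridge between the high-probability certified definition and those expectation-based lower bounds: since Farrell's and Mannor--Tsitsiklis's results bound $\E N$, the paper integrates the tail $\P(N>n)$ of a hypothetically certified algorithm to obtain $\E N\leq n'+\delta/\epsilon^2$ and then derives a contradiction, whereas your phrasing treats those results as if they directly controlled the stopping time in probability.
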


    \vspace{-0.1in}
    The bounds in Theorem~\ref{thm:no} depend on two key parameters, the estimation error level $\diff$ and failure probability $\delta$.
    Clearly, a smaller $\delta$ requires a higher level of confidence of the evaluation, thus a larger test sample complexity is needed.
    A smaller $\diff$ demands a greater evaluation accuracy, leading to a larger test sample complexity. 
    Compared to training sample complexity, test sample complexity focuses on a specific model and task instead of learning from a function class. Furthermore, in real-world evaluations, $\epsilon$ may be implicitly determined or dynamically adjusted.
    This necessitates a sequential evaluation and therefore introduces a sequential decision-making challenge, requiring the additional iterated logarithm term $\ln\ln(1/\diff)$ to control the overall failure probability -- an effect absent in classical training sample complexity bounds.
    
    \newcommand{\alggen}{\mathcal{A}_{\text{seq}}}
    Notably, even the vanilla online evaluation process (\Autoref{alg:seq}) can significantly reduce the amount of test points compared to the static evaluation process, particularly when the desired estimation error level $\epsilon$ is not too small. 
    
\vspace{-0.1in}

\section{Sample-Efficient Evaluation via Partition}\label{sec:var}
    In this section, we go beyond the intrinsic statistical bounds by incorporating additional knowledge about the model and task. 
    To further save test points, the key idea is to pay more attention to areas with higher uncertainty, instead of drawing test points IID from the entire space. Two critical observations drive this approach: (1) 
    An area with smaller loss variance is less uncertain and requires fewer test points for confident evaluation; and (2) 
    Properly dividing the input space may lower the variance within each partition, reducing the number of test points needed for evaluation.
    Thus, if we can divide the task distribution $P_{XY}$ into $K$ disjoint areas and reduce variance within each, we can achieve a more sample-efficient evaluation than the general approach in \Autoref{sec:general}. We formulate this idea in \Autoref{thm:prune}.


        \begin{definition}[Benign Partition]\label{asmp:loss_dist}
        Consider any partition $\{A_k\}_{k=1,\dots,K}$ on the support of $P_{XY}$, and  $v_k = \var\{\ell(f(X),Y) \mid A_k\}$ be the variance of the loss conditioned on $A_k$. 
        Given a test dataset $D_n=\{(X_i,Y_i),i=1,\dots,n\}$, let   $\tilde{D}_k= A_k \cap D_n$ and $n_k \defeq \abs{\tilde{D}_k}$. 
        We say $\{\tilde{D}_k\}_{k=1,\dots,K}$ is a benign partition of $D_n$ if the following holds:
        \begin{align*}
            n_k/n \geq \ln(K+1) \max\{v_k,\diff^{2/3}\}, k=1,\dots,K.
        \end{align*}
    \end{definition}

    \begin{theorem}\label{thm:prune}
        Suppose Assumptions~\ref{asmp:bound} holds, and let $n= \Theta(\diff^{-2}\{\ln(1/\delta)+\ln\ln(1/\diff)\} )$ denote the tight bound of Theorem \ref{thm:no} for some $\epsilon, \delta$. 
        Then \Autoref{alg:adap_var} operating with a benign partition of $D_n$ (Definition \ref{asmp:loss_dist}) is $(n', \epsilon, \delta)$-certified, such that
        $$\rho \defeq \frac{n'}{n} = O\biggl(\ln(K+1) \sum_{k=1}^K \max\{v_k, \diff^{2/3}\}\biggr).$$
    \end{theorem}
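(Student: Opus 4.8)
The plan is to reduce the partitioned evaluation to $K$ separate within-partition estimation problems, apply a variance-sensitive sequential (Bernstein-type) concentration bound to each, and recombine through the decomposition $R = \sum_{k=1}^{K} p_k R_k$ with $p_k = \P(A_k)$ and $R_k = \E\{\ell(f(X),Y)\mid A_k\}$. Writing the algorithm's estimate as $\hat R = \sum_k \hat p_k \hat R_k$ with $\hat p_k = n_k/n$, I would split
$$
\hat R - R = \sum_k \hat p_k(\hat R_k - R_k) + \Bigl(\tfrac1n\sum_{i=1}^n R_{k(X_i)} - \E\{R_{k(X)}\}\Bigr),
$$
and treat the two pieces separately: the first is the within-partition estimation error that the sampling schedule must drive down, while the second is an average of the bounded quantity $R_{k(X_i)}\in[0,1]$ over all $n$ observed inputs, hence $O(\epsilon)$ by Hoeffding since $n=\Theta(\epsilon^{-2}\{\ln(1/\delta)+\ln\ln(1/\epsilon)\})$; a small slice of the $\delta$ budget is reserved for it.

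For each partition I would invoke an anytime-valid Bernstein inequality: to shrink the confidence interval for $R_k$ below radius $\epsilon$ with per-partition failure probability $\delta_k=\delta/K$, it suffices to evaluate on the order of
$$
n'_k = O\Bigl(\tfrac{\{\ln(1/\delta_k)+\ln\ln(1/\epsilon)\}\,v_k}{\epsilon^2} + \tfrac{\ln(1/\delta_k)+\ln\ln(1/\epsilon)}{\epsilon}\Bigr)
$$
points drawn from $A_k$. Summing the failure probabilities over the $K$ partitions keeps the overall failure probability at $\delta$ and simultaneously certifies every interval the algorithm outputs; crucially, because these bounds are anytime-valid, the data-dependent order in which \Autoref{alg:adap_var} selects partitions to sample does not invalidate them, which is exactly what makes the adaptive schedule admissible.

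The key algebraic step converts the two Bernstein terms into the stated variance floor. Since $0<\epsilon<1$ gives $\epsilon\le\epsilon^{2/3}$, the range term obeys $1/\epsilon=\epsilon/\epsilon^2\le \epsilon^{2/3}/\epsilon^2$, so $v_k/\epsilon^2 + 1/\epsilon \le 2\max\{v_k,\epsilon^{2/3}\}/\epsilon^2$; combined with $\ln(1/\delta_k)=\ln(K/\delta)=O(\ln(K+1)\{\ln(1/\delta)+\ln\ln(1/\epsilon)\})$ this yields $n'_k = O(\epsilon^{-2}\{\ln(1/\delta)+\ln\ln(1/\epsilon)\}\ln(K+1)\max\{v_k,\epsilon^{2/3}\})$. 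Summing over $k$ and dividing by $n$ gives $\rho = n'/n = O(\ln(K+1)\sum_k \max\{v_k,\epsilon^{2/3}\})$. At this point the benign-partition hypothesis of Definition \ref{asmp:loss_dist} plays the role of a feasibility check: $n_k \ge n\,\ln(K+1)\max\{v_k,\epsilon^{2/3}\}$ guarantees that each cell of $D_n$ actually holds enough candidate points to supply its $n'_k$ evaluations, so the algorithm never exhausts a partition before that partition's interval has tightened.

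I expect the main obstacle to be the bookkeeping of the sequential, adaptive certification rather than any single inequality. One must verify that the random stopping time $N=\sum_k n'_k$ meets the $(n',\epsilon,\delta)$-certified definition uniformly over the data-dependent sampling order, and that all the lower-order effects — the per-partition $\ln\ln(1/\epsilon)$ corrections, the weight-estimation term, and the replacement of the unknown $v_k$ by a plug-in empirical variance $\hat v_k$ — stay subdominant and do not inflate the leading $\rho$. Pinning down the reduction $\ln(K/\delta)=O(\ln(K+1)\{\ln(1/\delta)+\ln\ln(1/\epsilon)\})$, which is where the clean $\ln(K+1)$ factor comes from and which holds once $\delta$ is bounded away from $1$, is the most delicate accounting.
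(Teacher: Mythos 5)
Your proposal is correct and follows essentially the same route as the paper's own proof: decompose the error across cells, certify each cell with an anytime-valid (adaptive) Bernstein bound under a $\delta/K$ budget, invoke the benign-partition condition as the feasibility guarantee that no cell is exhausted before its interval tightens, and run the same $\ln(K/\delta)=O\bigl(\ln(K+1)\{\ln(1/\delta)+\ln\ln(1/\epsilon)\}\bigr)$ and $\max\{v_k,\epsilon^{2/3}\}$ algebra to obtain $\rho$. One caveat worth noting: in the paper the $\epsilon^{2/3}$ floor originates from estimating the unknown $v_k$ (the empirical-variance plug-in you defer to ``bookkeeping''), not from the Bernstein range term, which on its own would give the sharper floor $\max\{v_k,\epsilon\}$; your loosening of $\epsilon$ to $\epsilon^{2/3}$ is exactly the slack that absorbs this plug-in cost, so your final bound still stands, but had you kept the tighter $\max\{v_k,\epsilon\}$ the deferred step would have broken it.
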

\vspace{-0.1in}

    \Autoref{thm:prune} suggests that, with a benign partition, we can use only $\rho$ percent of test points for evaluation, compared to the vanilla evaluation algorithm in \Autoref{sec:general} where no additional knowledge is available. Furthermore, as $\diff \to 0$, allowing $K$ to grow can lead to $\rho \to 0$, significantly reducing the number of test points needed.

\textbf{Illustration via examples. }  Benign partitions exist for a wide range of models and tasks, and we can have a good estimate of the saving ratio $\rho$. We illustrate it by the following corollary and example. 

    \begin{corollary}[Super-Gaussian loss distribution]\label{coro:superG}
        Suppose the loss distribution satisfies $h(z) \defeq \P(\ell(f(X),Y) = z) \geq A\exp\{-z^2/\sigma^2\}$ for $z \in [0,1]$ with constants $A,\sigma^2 >0$, then, we have 
        $\rho = O(\ln(K+1)/K)$ for any $K$ such that $K/\ln(K+1) \geq \exp(1/\sigma^2)/A$ and $K \leq \diff^{-1/3}$. As a result, $\rho = O(\diff^{1/3}\ln(1/\diff))$ when $K=O(\diff^{-1/3})$.
    \end{corollary}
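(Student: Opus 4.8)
The plan is to exhibit an explicit benign partition built directly from the loss values and then feed the resulting variance budget into \Autoref{thm:prune}. Since the loss takes values in $[0,1]$, I would slice this range into $K$ equal bins and set $A_k \defeq \{(x,y): \ell(f(x),y) \in [(k-1)/K, k/K)\}$ for $k=1,\dots,K$, which partitions the support of $P_{XY}$. The point of this construction is that each cell confines the loss to an interval of width $1/K$, so by Popoviciu's inequality the conditional variance obeys $v_k \le 1/(4K^2)$, uniformly in $k$.

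Next I would verify the benign-partition inequality $n_k/n \ge \ln(K+1)\max\{v_k,\diff^{2/3}\}$. The super-Gaussian hypothesis lower-bounds the mass of each bin: interpreting $h$ as the loss density and using $\exp(-z^2/\sigma^2) \ge \exp(-1/\sigma^2)$ for $z\in[0,1]$,
\begin{align*}
p_k \defeq \P(A_k) = \int_{(k-1)/K}^{k/K} h(z)\,dz \ge \frac{A\exp(-1/\sigma^2)}{K}.
\end{align*}
On the other hand, because $K \le \diff^{-1/3}$ we have $\diff^{2/3} \le K^{-2}$, and combined with $v_k \le 1/(4K^2)$ this gives $\max\{v_k,\diff^{2/3}\} \le K^{-2}$. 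Hence it suffices to check $A\exp(-1/\sigma^2)/K \ge \ln(K+1)\,K^{-2}$, which rearranges to $K/\ln(K+1) \ge \exp(1/\sigma^2)/A$ — precisely the stated hypothesis. The only gap here is that the definition constrains the realized counts $n_k$, whereas the display above bounds the population mass $p_k=\E[n_k]/n$; I would close this by a multiplicative Chernoff bound, noting that for $n$ of the order dictated by \Autoref{thm:no} the counts concentrate as $n_k \ge \tfrac12 n p_k$ with high probability, with the lost factor absorbed into the $O(\cdot)$ constant multiplying $\exp(1/\sigma^2)/A$.

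With the partition certified as benign, \Autoref{thm:prune} yields $\rho = O(\ln(K+1)\sum_{k=1}^K \max\{v_k,\diff^{2/3}\})$, so I only need to bound the variance budget. Using the elementary inequality $\max\{v_k,\diff^{2/3}\} \le v_k + \diff^{2/3}$,
\begin{align*}
\sum_{k=1}^K \max\{v_k,\diff^{2/3}\} \le \sum_{k=1}^K v_k + K\diff^{2/3} \le \frac{1}{4K} + K\cdot K^{-2} = O(1/K),
\end{align*}
where I again invoked $v_k \le 1/(4K^2)$ and $\diff^{2/3}\le K^{-2}$. This delivers $\rho = O(\ln(K+1)/K)$ for every admissible $K$. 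Finally, taking $K = \Theta(\diff^{-1/3})$, the largest value the constraints permit, substitutes $\ln(K+1) = O(\ln(1/\diff))$ and $1/K = O(\diff^{1/3})$, giving $\rho = O(\diff^{1/3}\ln(1/\diff))$.

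The main obstacle is the passage from the population bin masses $p_k$ to the sample-level condition on $n_k$ that \Autoref{asmp:loss_dist} literally requires; the equal-width loss slicing together with the ceiling $K\le\diff^{-1/3}$ is exactly what keeps each bin heavy enough (mass $\gtrsim 1/K$) while rendering the $\diff^{2/3}$ floor inactive, so that none of the variance reduction is wasted. Everything else reduces to bookkeeping with Popoviciu's inequality and the bound $\max\{a,b\}\le a+b$.
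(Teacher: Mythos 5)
Your proposal is correct and follows essentially the same route as the paper's proof: an equal-width partition of the loss range into $K$ bins, the per-bin variance bound $v_k = O(1/K^2)$, the super-Gaussian mass lower bound $\mu_k \gtrsim A e^{-1/\sigma^2}/K$ (the paper gets this via $z^2 \le z$ on $[0,1]$ and evaluating the integral, you via $z^2 \le 1$ — same endpoint), verification of the benign-partition condition under the two constraints on $K$, and then \Autoref{thm:prune}. The one genuine difference is your Chernoff step relating the realized counts $n_k$ to the population masses $p_k$; the paper sidesteps this entirely by identifying $\mu_k = N_k/n$ when a dataset is given, so your version is, if anything, more careful on a point the paper leaves implicit (at the harmless cost of a constant factor in the threshold $K/\ln(K+1) \ge \exp(1/\sigma^2)/A$).
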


    \begin{example}[Equally distributed problem difficulty]
        Consider a dataset split into $K$ difficulty levels, each having an equal number of test points. Also, the prediction loss of $f$ in $A_k$ lies in $[(k-1)/K, k/K]$ uniformly.
        For example, a dataset assessing LLMs on math problems may contain questions collected from primary school, high school, undergraduate, and graduate levels. Since the loss distribution is uniform on $[0,1]$, Corollary~\ref{coro:superG} applies, yielding $\rho = O(\ln(K+1)/K)$. 
    \end{example}

    \begin{remark}
        The term $\max\{v_k, \diff^{2/3}\}$ in \Autoref{thm:prune} arises from estimating the unknown loss variance in each partition. We can improve this term to $\max\{v_k, \diff\}$ if an upper bound asymptotically equivalent to $v_k$ is known. Moreover, equally-space partition is a special partition that improves this term to $1/K^2$, leading to $\rho = O(\diff\ln(1/\diff))$ in the super-Gaussian example above.
    \end{remark}

    \begin{remark}
        Our developed theory provides support for similarity-based dataset pruning methods, such as clustering. Those methods assume that the model performance is similar in a small neighborhood of any point $(X, Y)$. If the loss function is continuous, then the performance at $(X,Y)$ suffices to approximate local performance on an $\epsilon$ ball $B_{\epsilon}(X,Y) \defeq \{(X',Y'): \norm{(X,Y), (X',Y')} \leq \epsilon\}$ around it, reducing the number of required test points. 
    \end{remark}

    \vspace{-0.05in}

    \textbf{Finding effective benign partitions.}  As shown in \Autoref{thm:prune}, finding a benign partition that simultaneously minimizes in-group variance and maximizes the probability mass of each group is critical in enhancing test efficiency. 
    
    However, such partition information is usually unavailable in practice. To address this concern, we design \ours (\Autoref{fig:alg}), which dynamically partitions the input space in a model- and data-driven manner to maximize the benefit brought by the benign partition. 
    \ours repeats the following steps until the desired estimation error level is achieved or all points in $D_n$ have been evaluated, with full details in \Autoref{alg:adap_var}: 
    (1) Adaptive partition. Partition the input space based on the evaluated points by minimizing the uncertainty level. We propose to adopt $1$-nearest neighbor algorithm in the partition subroutine, as detailed in \Autoref{alg:part}.
    (2) Estimation. Compute the sample mean and associated CI radius for each group.
    (3) Target group selection. Identify the group that contributes most to uncertainty.
    (4) New sample evaluation. Sample and evaluate a new test point from the target group.

    \begin{algorithm*}[tb]
        \caption{Certified Evaluation with Adaptive 
 Partition (\ours)}\label{alg:adap_var}
        \begin{algorithmic}[1]
            \Require The estimation error level $\epsilon$, the failure probability $\delta$, test dataset $D$, warm start steps $m$, and a partition subroutine.
            \State Select the first $m$ points $S=\{(X_i, Y_i), i=1,\dots,m\}$ from $D$  \Comment{\textbf{Step 0}: Warm-up sampling}
            \State Evaluate $Z_i = \ell(f(X_i), Y_i), i=1,\dots,m$
            \While{True}
                \State Partition $D$ to $K$ areas $\tilde{D}_1,\dots,\tilde{D}_K$ by the partition subroutine. \Comment{\textbf{Step 1}: Partition dataset}
                \For{$k=1,\dots,K$} \Comment{\textbf{Step 2}: Calculate summary statistics per group}
                    \State Let $S_k \leftarrow \{(X,Y) \in S: (X, Y) \in \tilde{D}_k\}$
                    \State Let $n_k \leftarrow \abs{S_k}$, $N_k \leftarrow \abs{\tilde{D}_k}$ \Comment{Sample size of group $k$}
                \State Let $\hat{R}^k \leftarrow n_k^{-1}\sum_{i=1}^{n_k} Z_i^k$ \Comment{Empirical mean in $S_k$}
                \State Let $v_k \leftarrow n_k^{-1} \sum_{i=1}^{n_k}(Z_i^k-\hat{R}_k)^2 $ \Comment{Empirical variance}
                \State Let $\eta_k \leftarrow \sqrt{\{2\ln(\log(n_k)+1)+\ln(16K/\delta)\}/n_k}$ 
                \State Let $\epsilon_{k} \leftarrow 2\eta_k^2/3+2\sqrt{(v_k+\eta_k+\eta_k^2)\eta_k^2}$ \Comment{Confidence interval radius}
                \EndFor
                   

                        \State Let $\hat{R} \leftarrow \sum_{k=1}^{K} N_k\hat{R}^k/N$, $\hat{\epsilon} = \sum_j N_j \epsilon_j/N$ \Comment{Performance estimate and CI raduis}
                        \If{$\hat{\epsilon} \leq \epsilon$} \Comment{\textbf{Step 4}: Termination condition}
                            \State Terminate and return $\hat{R}, \hat{\epsilon}$
                    \EndIf
                    \State Terminate if all points in $D$ are evaluated
                     \State $k\leftarrow \argmax_{1\leq j\leq K, n_j \leq \abs{D_j}} \frac{\partial \epsilon_j}{\partial n_j} \cdot \frac{N_j}{N}$ \Comment{\textbf{Step 3}: Target sampling, identify group that contributes most uncertainty} 
                    \State Select a data point $(X_j, Y_j)$ from $\tilde{D}_k \backslash S_k$
                    \State Let $n_{k} \leftarrow n_{k} + 1$, add $(X_j, Y_j)$ to $S$ and evaluate $Z_{n_{k}}^{k} \leftarrow \ell(f(X_j), Y_j)$
            \EndWhile
            \Ensure The estimated loss $\hat{R}$, confidence interval radius $\hat{\epsilon}$, and number of evaluated points $\sum_k n_k$
        \end{algorithmic}
    \end{algorithm*}    

\vspace{-0.1in}

     \begin{algorithm}[tb]
        \caption{Subroutine: Partition by $1$-nearest neighbor}\label{alg:part}
        \begin{algorithmic}[1]
            \Require The test dataset $D$, evaluated points $S=\{(X_i,Y_i),i=1,\dots\}$ and the corresponding loss values $\{Z_i,i=1,\dots\}$.
            \For{$k=1,\dots,\lceil\ln(\abs{S})\rceil+1$}
                \State Assign $i$-th data point $(X_i,Y_i)$ in $S$ a label $\lfloor kZ_i \rfloor$, $i=1,\dots,n$
                \State Train a $1$-nearest neighbor classifier $\mathcal{C}_k$ on a random subset of S.
                \State Partition $D$ by the labels predicted using $\mathcal{C}_k$.
                \For{$j=1,\dots,k$}
                    \State Calculate $\epsilon_j$ following lines 6-11 in \Autoref{alg:adap_var} 
                \EndFor
                \State $\tilde{\epsilon}_k \leftarrow \sum_j N_j\epsilon_j/N$
            \EndFor
            \Ensure $K \leftarrow \argmin_k \tilde{\epsilon}_k$, $\tilde{D}_k$'s partitioned by $\mathcal{C}_K$
        \end{algorithmic}        
    \end{algorithm} 
    \vspace{-0.05in}
    
\section{Experiment}\label{sec:exp}
\vspace{-0.05in}

\subsection{Simulation}
\vspace{-0.05in}

    \textbf{Synthetic Data.} We generate simulated datasets as follows. First, we choose $K$ as the number of partitions. For each partition $A_k, k=1,\dots,K$, the query and response pairs follow a joint Gaussian distribution $(X,Y) \sim N(c_k, \sigma^2 I)$, where $I$ is the identity matrix and $c_k=(\lambda k,0,0,\dots) \in \real^d$ with some constants $d$, $\lambda$ and $\sigma^2$. Moreover, the loss $\ell(f(X),Y) \mid A_k$ follows a truncated Gaussian distribution with mean $(k-1/2)/K$ and variance $1/K^2$. In particular, we consider the following three scenarios:
    (S1) Single partition: $K=1$.
    (S2) Multiple easy-to-distinguish groups: $K=3$, $\lambda=5$.
    (S3) Multiple hard-to-distinguish groups: $K=3$, $\lambda=1$.
    For all scenarios, we set the failure probability $\delta=0.05$, input dimension $d=10$, variance $\sigma^2=1$, and generate a test dataset of size $n=5000$. 
    
    \textbf{Evaluation algorithms.} We compare three proposed online evaluation algorithms against a static evaluation baseline:
    (1) \textbf{Base}: The static evaluation process that evaluates all data points and provide a confidence interval. (2) \textbf{Seq}: The vanilla online evaluation process, detailed in \Autoref{alg:seq}. (3) \textbf{\ours}: Our proposed adaptive evaluation algorithm, as described in \Autoref{alg:adap_var}. (4) \textbf{Oracle}: A special case of \ours that uses the true partition as the partition subroutine, which is theoretically optimal.
    %

    \begin{figure*}[t]
            \centering
                \begin{minipage}{0.32\linewidth}
                    \centering                        \includegraphics[width=\linewidth]{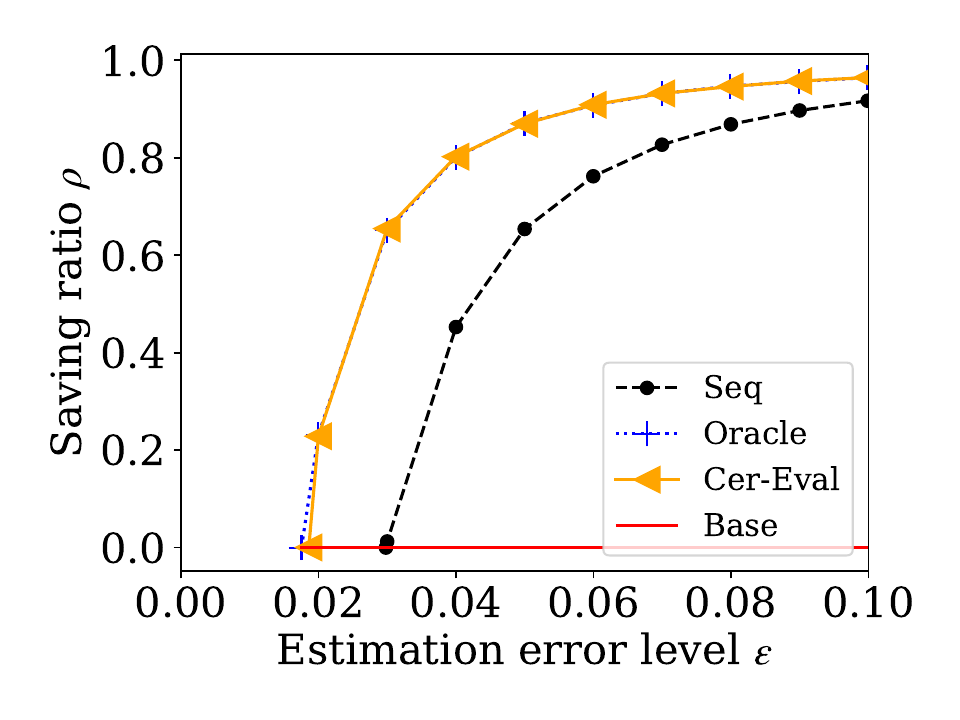}
                \end{minipage}
                \hfill
                \begin{minipage}{0.32\linewidth}
                    \centering
                        \includegraphics[width=\linewidth]{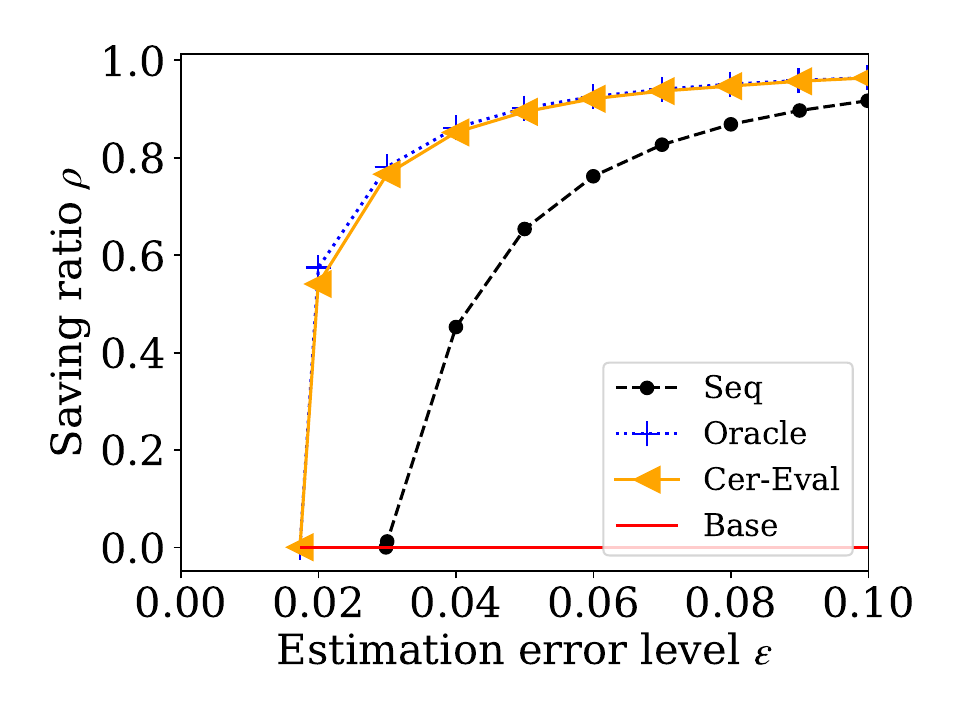}
                \end{minipage}
                \hfill
                \begin{minipage}{0.32\linewidth}
                    \centering
                        \includegraphics[width=\linewidth]{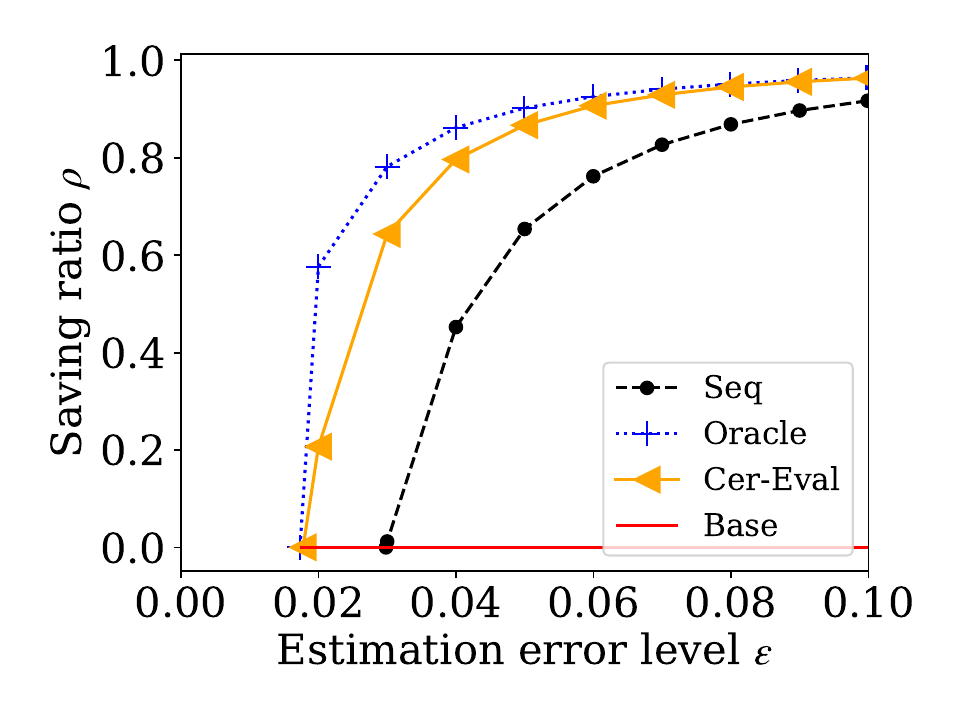}
                \end{minipage}
                 \caption{Percentage of  test points saved by \ours compared to the baselines on  \textbf{Synthetic Data} under (\textbf{Left}) the single partition scenario, (\textbf{Middle}) the easy-to-distinguish scenario, and (\textbf{Right}) the hard-to-distinguish scenario.}
       \label{fig:sim}
    \end{figure*}

    \begin{figure*}[tb]
    \vspace{-0.15in}
            \centering
                \begin{minipage}{0.32\linewidth}
                    \centering                        \includegraphics[width=\linewidth]{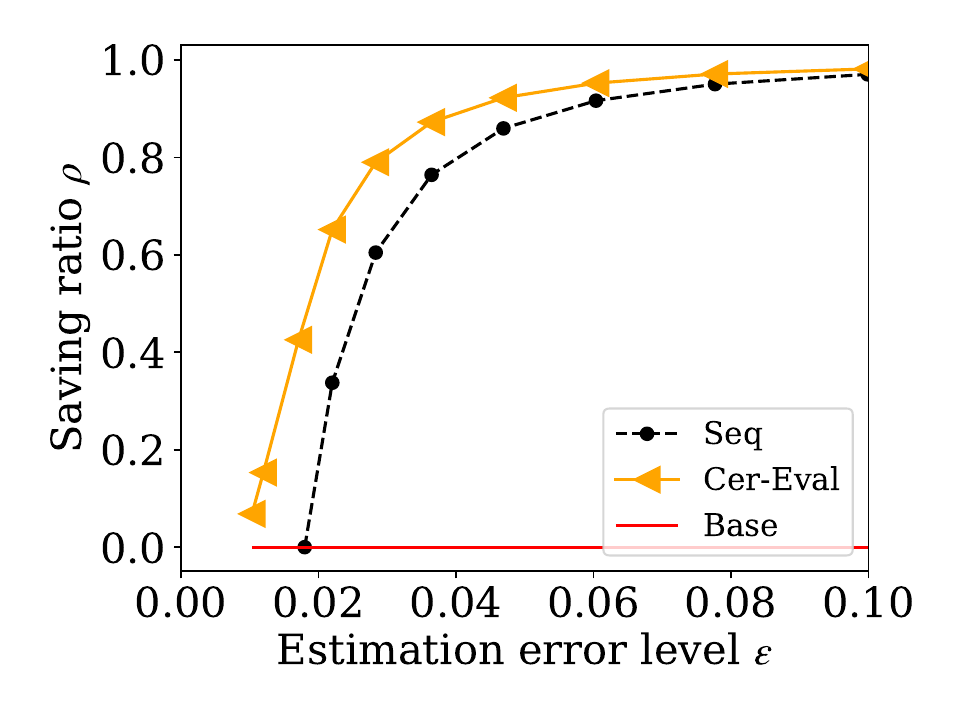}
                \end{minipage}
                \hfill
                \begin{minipage}{0.32\linewidth}
                    \centering
                        \includegraphics[width=\linewidth]{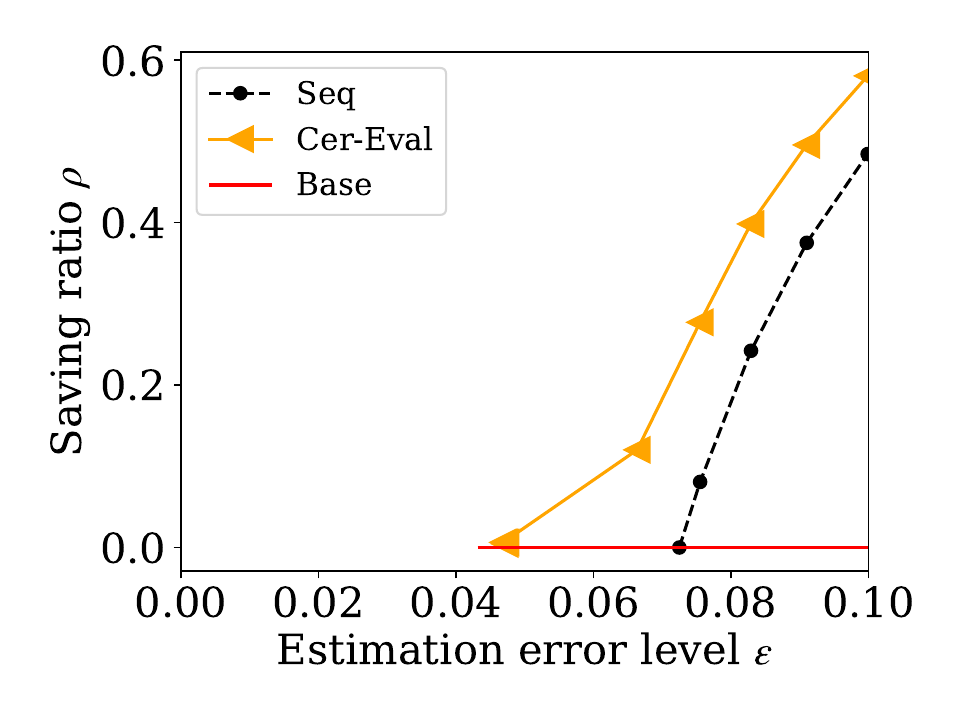}
                \end{minipage}
                \hfill
                \begin{minipage}{0.32\linewidth}
                    \centering
                        \includegraphics[width=\linewidth]{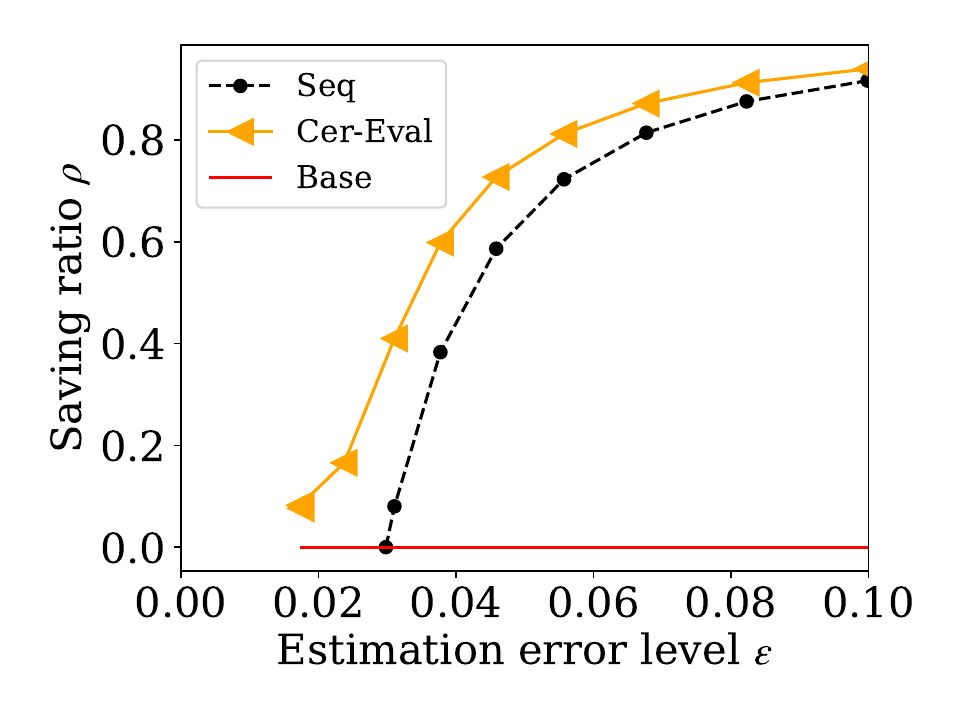}
                \end{minipage}
                 \caption{Percentage of  test points saved  by \ours compared to   baselines in \textbf{Real-World Benchmarks under GPT-4o}:  (\textbf{Left}) the MMLU dataset, (\textbf{Middle}) the AlpacaEval dataset, and (\textbf{Right}) the MATH dataset.}
       \label{fig:4o}
        \vspace{-0.2in}
    \end{figure*}

   \textbf{Evaluation metric.} 
   We evaluate the average loss for various values of estimation error level $\epsilon \in [\epsilon^*, 0.1]$, where $\epsilon^* = \sqrt{\log(1/\delta)/(2n)}$ is the estimation error level achieved by Base.
   For each test algorithm and estimation error level, we 
   report (1) the average saving ratio $\rho$, indicating the proportion of test points saved compared to `Base',
   and (2) the empirical failure probability, measuring the frequency at which $R$  falls outside the computed confidence interval.
   The experiment is replicated 20 times in each scenario.

   \textbf{Findings.} The saving ratios for all scenarios are reported in \Autoref{fig:sim}, while the empirical failure probability is zero across all methods. 
   Each $(\rho,\epsilon)$ pair can be regarded as a feasible solution, and a pair closer to the upper-left corner is preferred. The area under the $\rho$-$\epsilon$ curve therefore reflects the evaluation efficiency of an algorithm. 
   Specifically, we have the following key observations:
   \begin{itemize}
    \item Algorithms performing variance reduction partitions significantly improve test efficiency.
    In the easy-to-distinguish scenario (\Autoref{fig:sim}, middle panel), \ours and Oracle save nearly $60\%$ of test points compared to Base for $\epsilon=0.02$.
    \vspace{-0.05in}
    \item Partition-based algorithms improve efficiency even for a single group. 
    When $K=1$, both Oracle and \ours save about $20\%$ of test points for $\epsilon=0.02$ and $70\%$ for $\epsilon=0.03$, compared to Base. 
    \vspace{-0.05in}
    \item Partition quality is crucial. \ours evaluates more efficiently in the easy-to-distinguish scenario than in the hard one, highlighting the importance of effective partitioning. Oracle is \ours with the knowledge of perfect partitions, achieving even better performance.
    \vspace{-0.05in}
    \item Partition-based algorithms outperforms Seq in all scenarios. 
    This is because they effectively utilize model- and dataset-specific information, while Seq does not take those information into account. 
    \vspace{-0.05in}
    \item All online evaluating algorithms guarantee the desired confidence level, successfully including the truth in the reported CI with high probability.
   \end{itemize}
   \vspace{-0.1in}
   In short, simulation experiments confirm that \ours can greatly save the needed test sample points by adapting to the model and dataset of interest, with a controlled failure probability on the evaluation result.

\vspace{-0.1in}
\subsection{Real-World Benchmarks}
    \textbf{Datasets.} We conduct experiments on the following three real-world datasets: (1) MMLU~\citep{hendrycks2021measuring}: This dataset assesses an LLM's knowledge on 14,042 multiple choice questions across 57 subjects, such as history and math. Zero-one loss is used as the evaluation metric and we are interested in the model accuracy. (2) AlpacaEval~\citep{dubois2024length}: An automated evaluation benchmark that evaluates the LLM's natural language generating capability. We focus on evaluating the win rate of a target model's generated text compared to a reference model. A voting probability (or win rate score) is used as the evaluation metric. (3) MATH~\citep{hendrycksmath2021}: A dataset used to measure LLMs' math problem solving abilities. We use the zero-one loss to evaluate the model accuracy. 


    \textbf{Algorithms, models, and results.} 
    Since real-world datasets lack a true partition, we only compare three methods: Base, Seq, and \ours. 
    The embedding vectors for \ours are obtained using a pre-trained BERT model~\citep{Devlin2019BERTPO}, with an ablation study on embedding models provided in Appendix D.
    The empirical failure probability is calculated as the proportion of trials where the CI does not contain the model's average performance across the entire dataset. 
    Other experimental settings follow those of the simulation study.

    We assess four models across all datasets: GPT-4o, Llama3 8B, Mistral 7B, and Qwen2 7B. \Autoref{fig:4o} shows the experimental result for GPT-4o. Curves for other models are similar, hence are deferred to Appendix D together with full experiment details. 
    The empirical failure probability remains zero for all methods.
    We find that:
    \begin{itemize}
        \item Adaptive partition algorithm improves evaluation efficiency but varies by datasets and models. Aligning with the simulation study, the partition quality is crucial for \ours. As the partition is found adaptive to each model and task, the saving ratio of \ours thus varies.  
        On the MATH and MMLU datasets, \ours reduces the required test samples by $30\%\sim40\%$ for $\epsilon=1.5\epsilon^*$ and even $5\%\sim10\%$ for $\epsilon=\epsilon^*$. However, it achieves lower savings on AlpacaEval, reducing test points by only $10\%$ at $\epsilon=1.5\epsilon^*$.
        \vspace{-0.05in}
        \item \ours consistently outperforms `Seq'. By leveraging variance information, \ours uses fewer test points than Seq and successfully identifies meaningful partitions.
        Notably, on both AlpacaEval and MATH datasets, there is a change point of the $\rho$-$\epsilon$ curve for \ours. 
        For example, on AlpacaEval, the curve flattens when $\epsilon<0.067$. A closer investigation reveals that \ours has detected two distinct data groups after this point, where the model performs well on one group and performs poorly on the other. 
        It leads to significantly reduced within-group variance, therefore \ours obtains a confident evaluation result with fewer points. This observation aligns with our theoretical findings.
         \vspace{-0.05in}
        \item \ours helps determine the sufficiency of test data. Note that for MMLU and MATH dataset, \ours do not evaluate all data points to achieve an estimation error level of $\epsilon^*$. It indicates that these two datasets already have sufficient data for even a smaller error level or higher confidence level. In contrast, AlpacaEval dataset has to collect more test points for a more accurate evaluation. 
         \vspace{-0.05in}
        \item All methods maintain the desired confidence level. Across all model-dataset combinations, the empirical failure probabilities remains below $0.05$, corroborating the reliability of the proposed algorithms.    
    \end{itemize}

        \begin{figure*}[t]
            \centering
                \begin{minipage}{0.49\linewidth}
                    \centering                        \includegraphics[width=\linewidth]{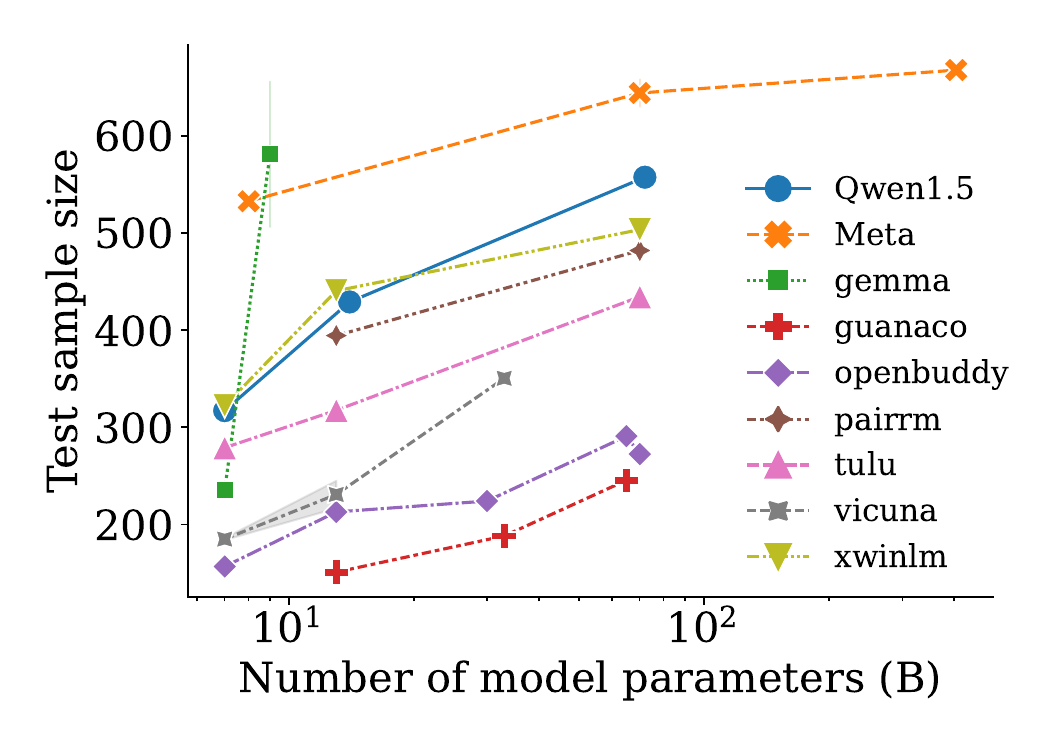}
                    \caption{Number of needed test points v.s. model size when evaluating models from multiple families using \ours, with estimation error level $\epsilon=0.07$ and failure probability $\delta=0.05$.}
        \label{fig:size}
                \end{minipage}
                \hfill
                \begin{minipage}{0.49\linewidth}
                    \centering
                        \includegraphics[width=\linewidth]{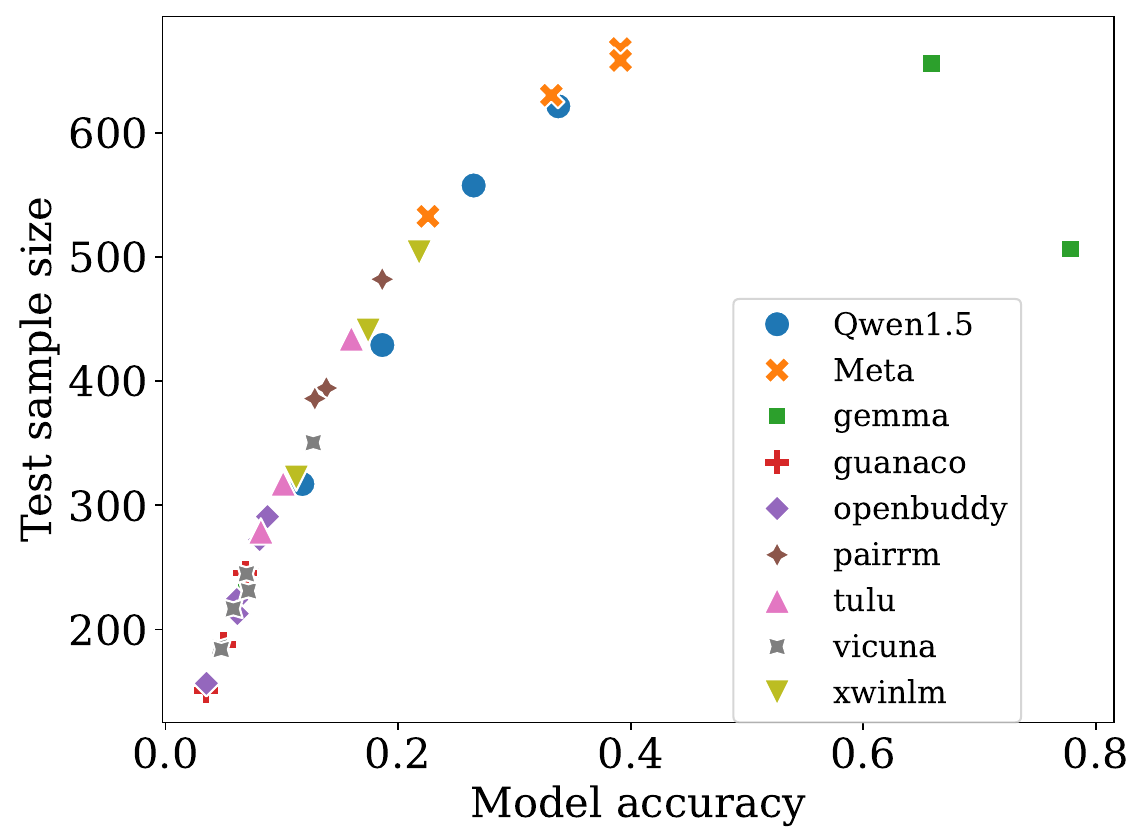}
                        \caption{Number of needed test points v.s. model accuracy when evaluating models from multiple families using \ours, with estimation error level $\epsilon=0.07$ and failure probability $\delta=0.05$.}
        \label{fig:acc}
                \end{minipage}
    \end{figure*}
    
    
    
     \vspace{-0.1in}
    \textbf{Test scaling law.} We further evaluated over 200 models on the AlpacaEval dataset to investigate potential factors affecting test sample complexity, analogous to the training scaling law~\citep{Kaplan2020ScalingLF, hoffmann2022an, bahri2024explaining}. 
    For multiple model families, \Autoref{fig:size} shows the relationship between model size and the needed test sample size for a certifiable evaluation within error level $\epsilon=0.06$ when using \ours. At first glance, it seems a larger model requires more test points, resembling the training scaling law. However, \Autoref{fig:acc} shows that the this trend is spurious:  larger models require more test points on the AlpacaEval dataset because their accuracy is closer to 0.5, leading to higher loss variance. 
    
    This finding suggests an intriguing connection between model performance and test sample complexity, offering insights into leveraging the training scaling law. Suppose an LLM continues to scale and achieves higher accuracy (above 0.5), fewer test points are sufficient for evaluation. In other words, despite increasing model sizes, we may not need a growing test dataset for LLM evaluation with a fixed estimation error level and confidence level. It therefore implies an encouraging prospect for future LLM development and evaluation.

    
\vspace{-0.05in}

In summary, \ours is recommended as the default online algorithm for certifiable and cost-efficient LLM evaluation. It effectively adapts to different datasets and models to minimize the evaluation cost and offers high flexibility in evaluation goals.

\vspace{-0.05in}

\section{Conclusion and Further Discussions}\label{sec:con}
    In this work, we propose an online evaluation framework to assess LLM performance, allowing users to determine their desired evaluation error and confidence level. This approach enables a certified and efficient evaluation process, where users can stop the evaluation once their goal is met, or continue collecting more test points if the current dataset is insufficient for a confident conclusion. Our proposed algorithm, \ours, effectively reduces the number of required test points by adapting to each model and dataset of interest.

    There are three promising directions for future work. First, instead of evaluating a particular model, our framework can be extended to compare the relative performance among multiple models, such as providing a certified ranking. The required estimation error level for distinguishing between models naturally depends on their performance differences, making our online evaluation process a well-suited approach for cost-efficient ranking. Second, an attractive extension is evaluating a model's out-of-distribution generalization performance. It is worth studying the certified evaluation when the collected dataset differs from the true underlying data distribution. Third, simultaneous evaluation across multiple tasks presents another opportunity for improvement. Evaluation tasks are known to be often correlated. Exploiting these relationships could further enhance evaluation efficiency.

\bibliography{ref}
\bibliographystyle{plainnat}

\newpage
\appendix
\section{Missing proofs}
    \textbf{General notations.} We will use $R$ for short of $R(f,P_{XY})$ when there is no ambiguity. Let $X_i, Y_i, i=1,\dots,n$ be IID sampled from $P_{XY}$, and $Z_i = \ell(f(X_i), Y_i)$. Then $Z_i$'s are IID with $0\leq Z_i \leq 1$, $\E(Z_i) = R(f,P_{XY})$, and $\var(Z_i) = \var(Z_i-0.5) \leq \E\{(Z_i-0.5)^2\} \leq 1/4$. Let $\hat{R}_n = n^{-1}\sum_{i=1}^n Z_i$. The online evaluation process will produce a sequence of estimate $\hat{R}_n$ and confidence interval (CI), characterized by its radius $\epsilon_n$, where $n$ is the number of evaluated test points so far.
    
    \textbf{Proof of \Autoref{thm:no}}
    \begin{proof}
        We first prove the upper bound by showing that \Autoref{alg:seq}, denoted as $\alggen$, is a $(n',\epsilon,\delta)$-certified test algorithm with $n'\leq 12\diff^{-2}\{\ln(1/\delta)+\ln\ln(1/\diff)\} $.
        Recall that $N$ is the number of test points evaluated when the algorithm terminates. The probability that $\alggen$ yields a wrong claim is bounded by Lemma~\ref{lemma:adap_h}, an adaptive Hoeffding-type inequality, as follows:
        \begin{align*}
            \P(\alggen \text{ makes a wrong decision}) = \P( \hat{R}_N -\epsilon_N \geq C) \leq \P(\{\exists n, \hat{R}_n - R \geq \epsilon_n\})
            \leq \delta/2. \numberthis \label{eq:thm_no_up_p3}
        \end{align*}
        As for the running time, let $n' = \frac{12\ln(4/\delta)+12\ln\ln(1/\diff)}{\diff^{2}}$. We can check that for any sufficiently small $\diff$, 
        \begin{align*}
            \epsilon_{n'}^2 = \frac{2\ln(\log(n')+1)+\ln(4/\delta)}{n'} \leq \diff^2/4.
        \end{align*}
        Therefore, the probability that the algorithm does not terminate after drawing $n'$ samples is
        \begin{align*}
            \P(\nalg \geq n') \leq \P(\hat{R}_{n'} + \epsilon_{n'} \geq C) \leq \P(\hat{R}_{n'} - R  \geq \epsilon_{n'}) \leq \delta/2, \numberthis \label{eq:thm_no_up_p4}
        \end{align*}
        where the last step is due to \Autoref{eq:thm_no_up_p3}. Combining \Autoref{eq:thm_no_up_p3, eq:thm_no_up_p4} proves that \Autoref{alg:seq} is a $(n',\epsilon,\delta)$-certified test algorithm.

        Now, we turn to prove the lower bound. Recall that $\epsilon$ can be implicitly determined by $R$, such as in the second estimation goal introduced in \Autoref{sec:formulation}. In particular, we have $\epsilon=\abs{R-C}$ when the user want to determine whether $R$ is above a threshold $C$. In this case, 
        \citet{farrell1964asymptotic} proved that for any algorithm that guarantees a $\delta$ failure probability, we have
        \begin{align*}
            \limsup_{\diff \to 0} \E \nalg \geq O\biggl(\frac{\ln\ln(1/\diff)}{\diff^{2}} \biggr). \numberthis \label{eq:thm_no_lb_p1}
        \end{align*}
        It is also known that for an easier problem where $\diff$ is known in prior~\citep{mannor2004sample}, the expected test points for any $\delta$-certified algorithm satisfies
        \begin{align*}
            \lim_{\diff \to 0, \delta \to 0} \E \nalg \geq O\biggl(\frac{\ln(1/\delta)}{\diff^{2}} \biggr). \numberthis \label{eq:thm_no_lb_p2}
        \end{align*}

        Suppose there exists an algorithm that is $(n', \epsilon,\delta)$-certified, where $n'$ satisfies
        \begin{align*}
            \lim_{\diff,\delta \to 0} n'(C,\delta,f,P_{XY})  \frac{\diff^{2}}{\ln(1/\delta)+\ln\ln(1/\diff)} = 0. \numberthis \label{eq:thm_no_lb_p4}
        \end{align*}
        As a result, for any fixed $\diff$, there exists a $\delta_0$ such that for any $\delta<\delta_0$, we have
        \begin{align*}
            \P\biggl(\nalg > \frac{\ln(1/\delta)+\ln\ln(1/\diff)}{\diff^{2}}\biggr) \leq \delta.
        \end{align*}
        Moreover, there exists an integer $n_0=\frac{\ln(1/\delta_0)+\ln\ln(1/\diff)}{\diff^{2}}$ such that for any $n>n_0$, we have
        \begin{align*}
            \P(\nalg > n) \leq \exp\{-n\diff^2+\ln\ln(1/\diff)\}.
        \end{align*}
        Then, the following holds for any $n'>n_0$:
        \begin{align*}
            \E \nalg &\leq n' + \sum_{n=n'}^{\infty} \P(\nalg > n)
            \leq n' + \int_{n'}^{\infty} \exp\{-x\diff^2+\ln\ln(1/\diff)\ dx
             \leq n' + \frac{\delta}{\diff^2}. \numberthis \label{eq:thm_no_lb_p3}
        \end{align*}

        Finally, comparing \Autoref{eq:thm_no_lb_p3} to \Autoref{eq:thm_no_lb_p1, eq:thm_no_lb_p2} yields that 
        \begin{align*}
            n' \geq O\biggl(\frac{\ln(1/\delta)+\ln\ln(1/\diff)}{\diff^{2}} \biggr),
        \end{align*}
        which contradicts with \Autoref{eq:thm_no_lb_p4}. We thus completes the proof.
    \end{proof}

    \textbf{Proof of \Autoref{thm:prune}}
    \begin{proof}
        First, \Autoref{thm:no} shows that for $\alggen$, the needed test sample size $n$ satisfies that
        \begin{align*}
             n = O\biggl(\frac{\ln(1/\delta)+2\ln\ln(1/\diff)}{\diff^{2}} \biggr) \numberthis \label{eq:prune_no_asymp}
        \end{align*}

        We prove that \Autoref{alg:adap_var} is a $(n',\epsilon,\delta)$-certified test algorithm, where $n'$ will be specified below. In particular, the known benign partition will be used as the partition subroutine in the algorithm input.
        Let $\mu_k \defeq \P(A_i)$ be the probability mass of area $A_k$, for $k=1,\dots,K$. Without loss of generality, we assume that $\mu_i$ is known. Otherwise, we can keeping drawing data points (without evaluating them) until the estimation of $\mu_i$ is sufficiently accurate.
        When a dataset is given instead of the data distribution $P_{X_Y}$, we have $\mu_k = N_k/n$, where $N_k$ is the number of test points in $A_k$. 

        
        When \Autoref{alg:adap_var} terminates and evaluate $\nalg$ points, we have 
        \begin{align*}
            \hat{R}_{\nalg} - R = \sum_{k=1}^K \mu_k(\hat{R}_k-R_k),
        \end{align*}
        where $R_k = \E_{A_k} \ell(f(X),Y)$ is the prediction error on the area $A_k$, and $\hat{R}_k = n_k^{-1} \sum_{i=1}^{n_k} Z_i^k$ is the empirical loss.  
        
        The empirical variance of the loss on $A_k$ is 
        \begin{align*}
            \hat{v}_k \defeq n_k^{-1} \sum_{i=1}^{n_k} (Z_i^k-\hat{R}_k)^2 = n_k^{-1} \sum_{i=1}^{n_k} (Z_i^k-R_k)^2 - (R^k-\hat{R}_k)^2.
        \end{align*}
        Let $\eta_k = \sqrt{\{2\ln(\log(n_k)+1)+\ln(16K/\delta)\}/n_k}$, \Autoref{lemma:adap_h} implies
        \begin{align*}
            \P\biggl(\biggl\lvert n_k^{-1} \sum_{i=1}^{n_k} (Z_i^k-R_k)^2 -v_k\biggr\rvert \leq \eta_k\biggr) &\geq 1-\delta/(4K),
            \\\P( \abs{R^k-\hat{R}_k} \leq \eta_k) &\geq 1-\delta/(4K).
        \end{align*}
        Let $\mathcal{E}=\{ v_k \leq \hat{v}_k + \eta_k + \eta_k^2, \hat{v}_k \leq v_k + \eta_k, \abs{R^k-\hat{R}_k} \leq \eta_k\}$, we have $\P(\mathcal{E}) \geq 1-\delta/2$.  
        Evoking Lemma~\ref{lemma:adap_b} on event $\mathcal{E}$, we have
        \begin{align*}
            \P\biggl(\hat{R}_k-R_k \geq \epsilon_k, \mathcal{E} \biggr) \leq \delta/(8K),
        \end{align*}
        where 
        \begin{align*}
            \epsilon_k = 2\eta_k^2/3+2\sqrt{(\hat{v}_k+\eta_k+\eta_k^2)\eta_k^2}.
        \end{align*}
        As a result, a union bound gives
        \begin{align*}
            \P(\mathcal{A} \text{ makes a wrong decision}) &\leq 1-\P(\mathcal{E}) + \P( \hat{R}_{\nalg} - R \geq \sum_k \mu_k \epsilon_{k}, \mathcal{E}) 
            \\&\leq \delta/2 + \sum_k\P(\exists k, \hat{R}_{k} - R_k \geq  \epsilon_{k}) 
            \leq 5\delta/8. \numberthis\label{eq:thm_var_p1}
        \end{align*}

        Regarding the required sample complexity, let $n'=n\ln(K+1)\sum_k \max\{v_k, \diff^{2/3}\}$. 
        Suppose the algorithm does not terminate after evaluating $n'$ points.
        Note that 
        \begin{align*}
            n_i/\{\hat{v}_i,\eta_i\} &= n_j/\max\{\hat{v}_j,\eta_j\}, 1\leq i \leq j \leq K,
             \sum_k n_k =n'. \numberthis \label{eq:thm_var_n}
        \end{align*}
        Therefore, $n_k = n\ln(K+1)\frac{\sum_k \max\{v_k, \diff^{2/3}\}}{\sum_k \max\{\hat{v}_k,\eta_k\}} \max\{\hat{v}_k,\eta_k\}$.
        Now, we can check that  
        \begin{align*}
            n_k = O(n\ln(K+1) \max\{v_k, \diff^{2/3}\}).
        \end{align*}
        To see it, when $v_k\geq O(\diff^{2/3})$, the above $n_k$ ensures $ \eta_k \leq O(\diff^{2/3})$, implying that $\max\{\hat{v}_k,\eta_k\} = \max\{\hat{v}_k,\diff^{2/3}\}$. 
        Similarly, when $v_k\leq O(\diff^{2/3})$, $n_k = O(n\ln(K+1) \diff^{2/3})$ ensures $\eta_k = O(\diff^{2/3})$ and therefore $\max\{\hat{v}_k,\eta_k\} = \max\{\hat{v}_k,\diff^{2/3}\}$.
        As a result, this $n_k$ is the solution of \Autoref{eq:thm_var_n}.
        
        Finally, on the event $\mathcal{E}$, the above $n'$, up to a universal constant, satisfies that $\epsilon_k \leq \diff/2$,
        implying that $\P(\nalg \geq n', \mathcal{E}) \leq \delta/8$. 
        Thus, \Autoref{alg:adap_var} is a $(n',\epsilon,\delta)$-certified algorithm and we complete the proof. 
    \end{proof}

    \textbf{Proof of \Autoref{coro:superG}}
    \begin{proof}
        Recall that the loss is bounded in $[0,1]$ by \Autoref{asmp:bound}. Given a super-Gaussian loss distribution, the probability mass of $A_k$ is 
        \begin{align*}
            \mu_k &= \P(A_k) \geq A\int_{(k-1)/K}^{k/K} \exp(-z^2/\sigma^2)dz \geq 
            A\int_{(k-1)/K}^{k/K} \exp(-z/\sigma^2)dz
            \\ &= A\sigma^2 e^{-k/(K\sigma^2)}(e^{1/(K\sigma^2)}-1) \geq \frac{A}{K}e^{-k/(K\sigma^2)}.
        \end{align*}
        For an equally-spaced partition, we have $v_k \leq 1/K^2$. 
        Therefore, when $K\leq \diff^{-1/3}$, \Autoref{asmp:loss_dist} is satisfied if 
        \begin{align*}
            \mu_k \geq \ln(K+1)/K^2, k=1,\dots,K
        \end{align*}
        which is equivalent to
        \begin{align*}
            K/\ln(K+1) \geq \exp\{1/\sigma^2\}/A. \numberthis \label{eq:thm_super_K}
        \end{align*}
        Therefore, when $K\leq \diff^{-1/3}$ and \Autoref{eq:thm_super_K} holds, \Autoref{thm:prune} implies
        \begin{align*}
            \rho = O\biggl(\ln(K+1) \sum_{k=1}^K \max\{v_k, \diff^{2/3}\}\biggr) = O(\ln(K+1)/K),
        \end{align*}
        thus completes the proof.
    \end{proof}

\section{Technical Lemmas}

    \begin{lemma}[Hoeffding Inequality \protect{\citep[][Theorem 2.8]{boucheron2013concentration}}]\label{lemma:hoeffding}
        For independent observations $X_1, \dots, X_n$ such that $a_i \leq X_i \leq b_i$ almost surely (a.s.), let $S_n = \sum_{i=1}^n \{X_i-\E(X_i)\}$, we have that
        \begin{align*}
            \P(S_n \geq t) \leq \exp\biggl\{\frac{-2t^2}{\sum_{i=1}^n(b_i-a_i)^2}\biggr\}.
        \end{align*}
    \end{lemma}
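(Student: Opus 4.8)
The plan is to prove this tail bound via the Cramér–Chernoff (exponential-moment) method combined with Hoeffding's lemma. First I would exponentiate and apply Markov's inequality: for any $\lambda>0$,
\[
\P(S_n \geq t) = \P\bigl(e^{\lambda S_n} \geq e^{\lambda t}\bigr) \leq e^{-\lambda t}\,\E\bigl[e^{\lambda S_n}\bigr],
\]
and then exploit independence of the $X_i$ to factorize the moment generating function as $\E[e^{\lambda S_n}] = \prod_{i=1}^n \E[e^{\lambda(X_i-\E X_i)}]$. This reduces the whole problem to controlling the MGF of a single centered, bounded summand.

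The key ingredient is Hoeffding's lemma: if a random variable $W$ satisfies $\E W = 0$ and $a \leq W \leq b$ almost surely, then $\E[e^{\lambda W}] \leq \exp\{\lambda^2(b-a)^2/8\}$. I would establish this by analyzing the log-MGF $\psi(\lambda) \defeq \ln \E[e^{\lambda W}]$. One verifies $\psi(0)=0$ and $\psi'(0)=\E W = 0$, while $\psi''(\lambda)$ is exactly the variance of $W$ under the exponentially tilted law $d\P_\lambda \propto e^{\lambda W}\,d\P$. Because this tilted law is still supported on $[a,b]$, its variance is at most $(b-a)^2/4$, the maximal variance attainable by any random variable confined to an interval of length $b-a$. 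A Taylor expansion with integral remainder then yields $\psi(\lambda) \leq \lambda^2(b-a)^2/8$.

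Applying Hoeffding's lemma to each centered summand $W_i = X_i - \E X_i$, which lies in an interval of width $b_i - a_i$, gives
\[
\P(S_n \geq t) \leq \exp\Bigl\{-\lambda t + \tfrac{\lambda^2}{8}\textstyle\sum_{i=1}^n (b_i-a_i)^2\Bigr\}.
\]
Finally I would optimize the exponent over $\lambda>0$; the minimizer is $\lambda^\star = 4t/\sum_{i=1}^n (b_i-a_i)^2$, and substituting it back produces the stated bound $\exp\{-2t^2/\sum_{i=1}^n(b_i-a_i)^2\}$.

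The main obstacle is Hoeffding's lemma, and within it the uniform bound $\psi''(\lambda) \leq (b-a)^2/4$ on the tilted variance; the Chernoff reduction and the closing one-dimensional optimization are entirely routine.
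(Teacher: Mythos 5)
Your proof is correct and follows exactly the standard route: the paper does not prove this lemma itself but cites it as Theorem 2.8 of Boucheron, Lugosi, and Massart (2013), whose proof is precisely your argument (Markov's inequality applied to the exponentiated sum, factorization of the moment generating function by independence, Hoeffding's lemma via the variance bound $(b-a)^2/4$ for the exponentially tilted law, and optimization over $\lambda$). Your closing step also checks out: the minimizer $\lambda^\star = 4t/\sum_{i=1}^n (b_i-a_i)^2$ yields the exponent $-2t^2/\sum_{i=1}^n(b_i-a_i)^2$ as claimed.
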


    \begin{lemma}[Bernstein Inequality \protect{\citep[][Equation 2.10]{boucheron2013concentration}}]\label{lemma:bernstein}
        Assume independent observations $X_1, \dots, X_n$ such that $  X_i \leq b$ a.s.. Let $S_n = \sum_{i=1}^n \{X_i-\E(X_i)\}$ and $v_n = \sum_{i=1}^n\var(X_i)$, we have
        \begin{align*}
            \P(S_n \geq t) \leq \exp\biggl\{\frac{-t^2}{2(v_n+bt/3)}\biggr\},
        \end{align*}
        or equivalently,
        \begin{align*}
            \P\biggl(S_n \geq \frac{b\ln(1/\delta)}{3}+\frac{1}{3}\sqrt{b^2\ln^2(1/\delta)+18v_n\ln(1/\delta)}\biggr) \leq \delta.
        \end{align*}
    \end{lemma}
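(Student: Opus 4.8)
The plan is to prove the tail bound by the Chernoff (exponential-moment) method and then invert it to recover the stated quantile form. First I would center, writing $Y_i \defeq X_i - \E(X_i)$, so that $\E(Y_i)=0$, $\var(Y_i)=\var(X_i)$, and $Y_i \leq b$ a.s.\ (the one-sided bound passing to the centered variable whenever the means are nonnegative, as in our loss setting where $X_i\geq 0$). For any $\lambda>0$, Markov's inequality applied to $e^{\lambda S_n}$ together with independence gives
\begin{align*}
    \P(S_n \geq t) \leq e^{-\lambda t}\,\E\bigl(e^{\lambda S_n}\bigr) = e^{-\lambda t}\prod_{i=1}^n \E\bigl(e^{\lambda Y_i}\bigr),
\end{align*}
so everything reduces to bounding each single-variable moment generating function $\E(e^{\lambda Y_i})$ and then optimizing over $\lambda$.

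The crux, and where I expect the real work to lie, is the per-variable bound. I would exploit the one-sided bound $Y_i\leq b$ through the fact that $y\mapsto (e^{\lambda y}-1-\lambda y)/y^2$, extended by continuity to $\lambda^2/2$ at $y=0$, is nondecreasing for $\lambda>0$. Hence on the event $\{Y_i\leq b\}$ one has the pointwise inequality $e^{\lambda Y_i}-1-\lambda Y_i \leq b^{-2}(e^{\lambda b}-1-\lambda b)\,Y_i^2$; taking expectations and using $\E(Y_i)=0$ to cancel the linear term yields $\E(e^{\lambda Y_i}) \leq 1 + b^{-2}(e^{\lambda b}-1-\lambda b)\var(X_i)$, and the elementary inequality $1+u\leq e^u$ upgrades this to $\E(e^{\lambda Y_i}) \leq \exp\{b^{-2}(e^{\lambda b}-1-\lambda b)\var(X_i)\}$. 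Multiplying over $i$ and taking logarithms then gives
\begin{align*}
    \log \prod_{i=1}^n \E\bigl(e^{\lambda Y_i}\bigr) \leq \frac{v_n}{b^2}\bigl(e^{\lambda b}-1-\lambda b\bigr).
\end{align*}

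To land on the stated form I would replace the exponential factor by the surrogate $e^{u}-1-u \leq \tfrac{u^2/2}{1-u/3}$, valid for $0\leq u<3$, at $u=\lambda b$, giving $\P(S_n\geq t)\leq \exp\{-\lambda t + \tfrac{\lambda^2 v_n/2}{1-\lambda b/3}\}$ for $0<\lambda<3/b$. Choosing $\lambda = t/(v_n+bt/3)$ makes $1-\lambda b/3 = v_n/(v_n+bt/3)$, and a short substitution collapses the exponent to exactly $-t^2/\{2(v_n+bt/3)\}$, which is the first displayed inequality. For the equivalent quantile form I would set this tail equal to $\delta$, i.e.\ solve $t^2 = 2\ln(1/\delta)\,(v_n+bt/3)$; this quadratic in $t$ has positive root $\tfrac{b\ln(1/\delta)}{3}+\tfrac13\sqrt{b^2\ln^2(1/\delta)+18 v_n\ln(1/\delta)}$, and since the tail is decreasing in $t$, the equality at the root yields the claimed inequality at that threshold. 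The main obstacle is the single-variable moment generating function bound: because only the one-sided bound $Y_i\leq b$ is available, I must use the monotonicity of $(e^{\lambda y}-1-\lambda y)/y^2$ rather than a crude two-sided moment comparison, and it is precisely this step that makes the variance proxy $v_n$---not the range---control the exponent.
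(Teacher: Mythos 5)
Your proof is correct and complete: it is the standard Chernoff/moment-generating-function derivation of Bernstein's inequality (monotonicity of $(e^{\lambda y}-1-\lambda y)/y^2$, the surrogate $e^u-1-u\leq \tfrac{u^2/2}{1-u/3}$, the choice $\lambda = t/(v_n+bt/3)$, and the quadratic inversion all check out). The paper does not prove this lemma at all --- it simply cites it as Equation 2.10 of Boucheron--Lugosi--Massart (2013) --- and your argument is precisely the textbook proof behind that citation, so there is nothing to contrast. One genuinely good catch on your part: as literally stated (one-sided bound $X_i\leq b$ on the \emph{uncentered} variables combined with the variance proxy $v_n=\sum_i\var(X_i)$), the lemma is false in general --- it requires $X_i-\E(X_i)\leq b$, which your centering step secures via $\E(X_i)\geq 0$, an assumption that indeed holds in the paper's setting where losses satisfy $0\leq X_i\leq 1$.
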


    \begin{lemma}[Adaptive Hoeffding Inequality \protect{\citep[][Theorem 1]{zhao2016adaptive}}]\label{lemma:adap_h}
        Let $\epsilon_n = \sqrt{\frac{2\ln(\log(n)+1)+\ln(4/\delta)}{n}}$.
        For independent observations $X_1, \dots, X_n$ such that $0 \leq X_i \leq 1$ a.s., let $S_n = n^{-1}\sum_{i=1}^n \{X_i-\E(X_i)\}$.
        Then, we have
        \begin{align*}
            \P(\{\exists n, S_n/n \geq \epsilon_n \}) \leq \delta/2. 
        \end{align*}
    \end{lemma}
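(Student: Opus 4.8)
The result is a time-uniform (``anytime'') version of Hoeffding's inequality, so the plan is to upgrade the fixed-$n$ bound of \Autoref{lemma:hoeffding} to one that holds simultaneously over all sample sizes. The naive route---union-bounding the fixed-$n$ tail $\P(S_n \ge n\epsilon_n) \le \exp(-2n\epsilon_n^2)$ over every $n$---fails: with $n\epsilon_n^2 = 2\ln(\log n+1)+\ln(4/\delta)$ the summand decays only like $(\log n+1)^{-4}$, and $\sum_n (\log n+1)^{-4}$ diverges. The fix is a \emph{peeling} (stratification) argument that pays the union-bound cost once per geometric epoch rather than once per integer, and the $\ln\ln$ term in $\epsilon_n$ is exactly what makes the resulting per-epoch series summable.

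Concretely, I would partition $\nats$ into blocks $I_j = \{n : \lceil c^{\,j-1}\rceil \le n < \lceil c^{\,j}\rceil\}$ for a fixed ratio $c>1$ and bound the failure probability on each block separately. Because $S_n = \sum_{i\le n}\{X_i-\E(X_i)\}$ is a martingale with increments in $[0,1]$, the process $e^{\lambda S_n}$ is a nonnegative submartingale, and Hoeffding's lemma gives $\E e^{\lambda(X_i-\E X_i)} \le e^{\lambda^2/8}$; combining this with Doob's maximal inequality yields the block-level maximal bound $\P\bigl(\max_{n < \lceil c^j\rceil} S_n \ge a\bigr) \le \exp(-2a^2/\lceil c^j\rceil)$, the same exponent as the pointwise bound but now valid for the running maximum over the whole block. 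Since $n\epsilon_n$ is increasing in $n$, its minimum over $I_j$ occurs at the left endpoint, so setting $a = \lceil c^{\,j-1}\rceil\,\epsilon_{\lceil c^{\,j-1}\rceil}$ controls $\P(\exists n\in I_j:\ S_n \ge n\epsilon_n)$ via the block-endpoint maximal inequality.

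Plugging in $a^2 \approx c^{\,j-1}\bigl(2\ln(\log c^{\,j-1}+1)+\ln(4/\delta)\bigr)$ and dividing by the block length $\lceil c^j\rceil \approx c\cdot c^{\,j-1}$ gives a per-block failure probability of order $(\log c^{\,j-1}+1)^{-4/c}(\delta/4)^{2/c}$, i.e.\ of order $(j\ln c)^{-4/c}(\delta/4)^{2/c}$. Summing over $j$ produces a convergent series precisely when $c$ is chosen small enough that $4/c>1$; with such a $c$ the total is bounded by $\delta/2$ after accounting for the $(\delta/4)^{2/c}$ factor. This is where the stated constants originate, and I would reserve the bulk of the effort for the bookkeeping that pins down an admissible $c$ and verifies the geometric/poly-logarithmic sum against the target $\delta/2$.

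The main obstacle is the mismatch between the $n$-dependent boundary $\epsilon_n$ and the maximal inequality, whose natural scale is the block endpoint $\lceil c^j\rceil$: evaluating the boundary at the left endpoint while applying the maximal inequality at the right endpoint costs a factor $c$ in the exponent, and the construction only closes because the $2\ln(\log n+1)$ term supplies enough slack to absorb this loss and still leave a summable tail. An alternative that sidesteps the non-monotonicity of $S_n$ entirely is the method of mixtures: mix the exponential supermartingale $M_n(\lambda)=\exp(\lambda S_n-\lambda^2 n/8)$ over $\lambda$ against a suitable (e.g.\ Gaussian) density and apply Ville's inequality $\P(\exists n:\ \bar M_n \ge 1/\delta)\le\delta$, which yields an iterated-logarithm boundary of the same $\sqrt{(\,\ln\log n+\ln(1/\delta)\,)/n}$ form in one stroke, at the cost of less transparent constants.
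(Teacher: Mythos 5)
Your proposal is correct and follows essentially the same route as the paper: the lemma itself is imported from \citet{zhao2016adaptive}, but the paper's own proof of the companion Adaptive Bernstein Inequality (Lemma~\ref{lemma:adap_b}) is exactly your peeling argument with the dyadic choice $c=2$ --- blocks $2^l\leq n\leq 2^{l+1}$, a maximal inequality over each block, the boundary evaluated at the left endpoint $2^l\epsilon_{2^l}$, and per-block probabilities $e^{-u_{2^l}}=(l+1)^{-2}\delta/4$ summing to at most $\delta/2$. The only bookkeeping point your sketch leaves loose is that, besides summability ($4/c>1$), you also need $2/c\geq 1$ so that the factor $(\delta/4)^{2/c}$ does not exceed $\delta/4$; the choice $c=2$ satisfies both and closes the constants exactly.
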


    \begin{lemma}[Maximal Form of Bernstein Inequality \protect{\citep{kevei2011note}}]\label{lemma:bern_max}
        Assume independent observations $X_1, \dots, X_n$ such that $  X_i \leq b$ a.s.. Let $S_n = \sum_{i=1}^n \{X_i-\E(X_i)\}$ and $v_n = \sum_{i=1}^n\var(X_i)$, we have
        \begin{align*}
            \P(\max_{1\leq i\leq n}S_i \geq t) \leq \exp\biggl\{\frac{-t^2}{2(v_n+bt/3)}\biggr\},
        \end{align*}
        or equivalently,
        \begin{align*}
            \P\biggl(\max_{1\leq i\leq n} S_i \geq \frac{b\ln(1/\delta)}{3}+\frac{1}{3}\sqrt{b^2\ln^2(1/\delta)+18v_n\ln(1/\delta)}\biggr) \leq \delta.
        \end{align*}
    \end{lemma}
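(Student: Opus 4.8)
The plan is to upgrade the single-time Bernstein bound of \Autoref{lemma:bernstein} to its maximal form by building an exponential supermartingale from the partial-sum process and applying a maximal inequality for nonnegative supermartingales (Ville's inequality). The feature I must preserve throughout is that the right-hand side still involves the \emph{total} variance $v_n = \sum_{j=1}^n \var(X_j)$, not a partial sum; this is precisely what makes the maximal statement nontrivial and distinct from a naive union bound.

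First I would center the observations, writing $\bar{X}_j = X_j - \E(X_j)$ so that $S_i = \sum_{j=1}^i \bar{X}_j$, each $\bar{X}_j \leq b$ a.s., and $\var(\bar{X}_j) = \sigma_j^2$. Using that $x \mapsto (e^{x}-1-x)/x^2$ is increasing, together with $\lambda \bar{X}_j \leq \lambda b$ for $\lambda > 0$, the standard Bennett moment-generating-function estimate gives
\[
\E[e^{\lambda \bar{X}_j}] \leq \exp\biggl( \frac{e^{\lambda b} - 1 - \lambda b}{b^2}\, \sigma_j^2 \biggr),
\]
which is exactly the ingredient already underlying \Autoref{lemma:bernstein}. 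Writing $c(\lambda) = (e^{\lambda b} - 1 - \lambda b)/b^2 > 0$ and $\mathcal{F}_i = \sigma(X_1,\dots,X_i)$, I would then define
\[
M_i = \exp\biggl( \lambda S_i - c(\lambda) \sum_{j=1}^i \sigma_j^2 \biggr), \qquad M_0 = 1.
\]
By independence and the MGF bound, $\E[M_i \mid \mathcal{F}_{i-1}] \leq M_{i-1}$, so $(M_i)$ is a nonnegative supermartingale with $\E M_0 = 1$, and Ville's inequality yields $\P(\max_{1\leq i\leq n} M_i \geq a) \leq 1/a$ for every $a > 0$.

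The decisive step is to bound $M_i$ from below on the event $\{S_i \geq t\}$. Since $\lambda > 0$ gives $\lambda S_i \geq \lambda t$, and since the running variance is monotone with $\sum_{j=1}^i \sigma_j^2 \leq v_n$ while $c(\lambda) > 0$, we obtain
\[
M_i \geq \exp\bigl( \lambda t - c(\lambda)\, v_n \bigr) =: a.
\]
Hence $\{\max_i S_i \geq t\} \subseteq \{\max_i M_i \geq a\}$, and Ville's inequality gives $\P(\max_i S_i \geq t) \leq \exp(-\lambda t + c(\lambda) v_n)$. Optimizing over $\lambda > 0$ is identical to the optimization that produces \Autoref{lemma:bernstein}, so after the standard Bennett-to-Bernstein simplification (bounding $h(u)=(1+u)\log(1+u)-u$ below by $u^2/(2(1+u/3))$) I recover the first displayed bound. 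The second, equivalent display then follows by setting the right-hand side equal to $\delta$ and solving the resulting quadratic $t^2 - \tfrac{2b\ln(1/\delta)}{3}t - 2v_n\ln(1/\delta) = 0$ for the positive root.

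The main obstacle is justifying that the total variance $v_n$ may legitimately replace the running variance $\sum_{j\leq i}\sigma_j^2$ in the exponent without loss. This hinges entirely on the sign of $c(\lambda)$ together with the monotone domination $\sum_{j\leq i}\sigma_j^2 \leq v_n$, both of which hold only because we work on the upper tail with $\lambda > 0$. I would therefore be careful to keep the whole argument restricted to $\lambda > 0$, so that enlarging the variance term from the partial sum to $v_n$ only \emph{lowers} the threshold $a$, thereby keeping the maximal-inequality bound valid while retaining the same constant $v_n$ that appears in the single-time version.
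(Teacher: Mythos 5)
Your proof is correct, and it is genuinely different from what the paper does: the paper gives no proof at all for this lemma, importing it as a known result from \citep{kevei2011note}. The argument there is of a different character from yours --- Kevei and Mason deduce the maximal inequality from endpoint Bernstein-type bounds on the partial-sum segments (a transfer argument in the spirit of Montgomery--Smith and L\'evy--Ottaviani), so the exponent is inherited from \Autoref{lemma:bernstein} without reopening any moment-generating-function computation. You instead rebuild the bound from scratch: the Bennett MGF estimate $\E[e^{\lambda \bar{X}_j}] \le \exp\{c(\lambda)\sigma_j^2\}$, the compensated supermartingale $M_i$, and Ville's inequality, with the compensator monotonicity step correctly handling the passage from the running variance $\sum_{j\le i}\sigma_j^2$ to the total variance $v_n$ (your restriction to $\lambda>0$ is exactly what makes this legitimate). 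Your closing quadratic $t^2 - \tfrac{2b\ln(1/\delta)}{3}t - 2v_n\ln(1/\delta)=0$ does produce the stated threshold $\tfrac{b\ln(1/\delta)}{3}+\tfrac{1}{3}\sqrt{b^2\ln^2(1/\delta)+18v_n\ln(1/\delta)}$. What your route buys is a self-contained derivation with the same constants as the endpoint bound; what the citation route buys is black-box generality (any endpoint tail bound of this shape upgrades to a maximal one). Note also that you could compress your argument: since $e^{\lambda S_i}$ is a nonnegative submartingale, Doob's maximal inequality gives $\P(\max_{1\le i\le n} S_i \ge t) \le e^{-\lambda t}\,\E e^{\lambda S_n} \le e^{-\lambda t + c(\lambda)v_n}$ in one line, with $v_n$ appearing automatically at the terminal time and no compensator needed.

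One hypothesis mismatch worth flagging: you assert $\bar{X}_j = X_j - \E(X_j) \le b$ a.s., but the lemma assumes only $X_j \le b$, and the implication fails when $\E(X_j) < 0$. The variance-based MGF estimate genuinely requires the one-sided bound on the \emph{centered} variable (the version needing only $X_j \le b$ pays with the second moment $\E(X_j^2)$ in place of $\var(X_j)$). This imprecision is inherited from the lemma statement itself and is harmless everywhere the paper uses it, since the losses satisfy $0 \le Z_i \le 1$ by \Autoref{asmp:bound}, hence $\E(X_j)\ge 0$ and $\bar{X}_j \le b = 1$; but in your write-up you should either add the hypothesis $X_j - \E(X_j) \le b$ or note that nonnegativity of the $X_j$ restores it.
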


    \begin{lemma}[Adaptive Bernstein Inequality ]\label{lemma:adap_b}
        Let $u_n=2\ln(\log(n)+1)+\ln(4/\delta)$ and $\epsilon_n = (b u_n+\sqrt{b^2u_n^2+18v_{2n} u_n})/(3n)$.
        For independent observations $X_1, \dots, X_n$ such that $0 \leq X_i \leq 1$ a.s., let $S_n = \sum_{i=1}^n \{X_i-\E(X_i)\}$.
        Then, we have
        \begin{align*}
            \P(\{\exists n, S_n/n \geq \epsilon_n \}) \leq \delta/2. 
        \end{align*}
    \end{lemma}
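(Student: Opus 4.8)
The plan is to establish a time-uniform (anytime) Bernstein bound by the same dyadic ``peeling'' strategy that underlies the Adaptive Hoeffding inequality (\Autoref{lemma:adap_h}), but using the maximal Bernstein inequality (\Autoref{lemma:bern_max}) as the per-block tool so that the variance $v_n$ enters the radius. Concretely, I would partition the index set into geometrically growing epochs $I_k = \{n : 2^k \le n < 2^{k+1}\}$, $k = 0, 1, 2, \dots$ (reading $\log = \log_2$, consistent with the $2\ln(\log(n)+1)$ term), decompose
\[
  \P(\exists n : S_n \ge n\epsilon_n) \le \sum_{k \ge 0} \P\bigl(\exists n \in I_k : S_n \ge n\epsilon_n\bigr),
\]
and control each epoch by a single application of the maximal inequality at a deterministic threshold.

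The key reduction is to replace the random, $n$-dependent radius $n\epsilon_n$ on $I_k$ by one fixed threshold $t_k$. Writing $g(u, v) = \tfrac13\bigl(bu + \sqrt{b^2 u^2 + 18 v u}\bigr)$, note that $n\epsilon_n = g(u_n, v_{2n})$ and that $g$ is nondecreasing in both arguments. For $n \in I_k$ we have $\log_2 n \ge k$, hence $u_n \ge u_k^- := 2\ln(k+1) + \ln(4/\delta)$; and, crucially, $2n \ge 2^{k+1} > m_k := 2^{k+1}-1$, so $v_{2n} \ge v_{m_k}$. Monotonicity of $g$ then gives $n\epsilon_n \ge g(u_k^-, v_{m_k}) =: t_k$ for every $n \in I_k$. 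This is exactly the place where the definition uses $v_{2n}$ rather than $v_n$: the doubling of epochs means the right endpoint $2^{k+1}$ of $I_k$ is at most $2n$, so the total variance appearing in the maximal inequality over $I_k$ is dominated by $v_{2n}$.

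With the threshold fixed, I would bound, for each $k$,
\[
  \P\bigl(\exists n \in I_k : S_n \ge n\epsilon_n\bigr) \le \P\Bigl(\max_{1 \le i \le m_k} S_i \ge t_k\Bigr),
\]
and apply the equivalent form of \Autoref{lemma:bern_max} at level $\delta_k$ with $\ln(1/\delta_k) = u_k^-$, i.e.\ $\delta_k = \tfrac{\delta}{4}(k+1)^{-2}$. Since $v_{m_k}$ is precisely the total variance entering that maximal inequality, the lemma's threshold equals $g(u_k^-, v_{m_k}) = t_k$ and yields $\P(\max_{1 \le i \le m_k} S_i \ge t_k) \le \delta_k$. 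Summing over epochs,
\[
  \sum_{k \ge 0} \delta_k = \frac{\delta}{4} \sum_{k \ge 0} (k+1)^{-2} = \frac{\delta}{4}\cdot\frac{\pi^2}{6} < \frac{\delta}{2},
\]
which is the claimed bound; the constant $4$ in $\ln(4/\delta)$ is calibrated exactly so that this series stays below $\delta/2$.

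The main obstacle is the reduction in the second paragraph: one must verify that over each dyadic block the data-dependent radius $n\epsilon_n$ is uniformly lower bounded by a deterministic $t_k$ built from the block's terminal variance, which is what legitimizes invoking the maximal inequality at a single threshold. This hinges on (i) the monotonicity of $g$ in both the logarithmic term and the variance, and (ii) the fact that $v_{2n}$ dominates $v_{m_k}$ throughout $I_k$ -- the technical reason the statement is phrased with $v_{2n}$. Once this alignment is secured, the remaining steps (choosing $\delta_k \propto (k+1)^{-2}$ and summing the resulting $\pi^2/6$ series) are routine.
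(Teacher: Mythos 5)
Your proposal is correct and takes essentially the same route as the paper's own proof: a dyadic peeling of the index set, one application of the maximal Bernstein inequality (\Autoref{lemma:bern_max}) per block at a deterministic threshold anchored at the block's left endpoint, and a union bound over the summable $(k+1)^{-2}\delta/4$ series. If anything, you spell out two steps the paper leaves implicit --- the monotonicity of the radius $n\epsilon_n$ in both $u_n$ and $v_{2n}$, and the fact that $v_{2n}$ dominates the block's terminal variance, which is precisely why the lemma is stated with $v_{2n}$ rather than $v_n$.
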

    \begin{proof}
        Applying Lemma~\ref{lemma:bern_max} yields 
        \begin{align*}
            \P(\{\exists n, S_n/n \geq \epsilon_n \})  
            &= \P(\cup_{n=1}^\infty\{S_n \geq n\epsilon_n\} )
            \\& = \P(\cup_{l=0}^\infty\cup_{2^l \leq n \leq 2^{l+1}}\{S_n \geq n\epsilon_n\} )
            \\&\leq \P(\cup_{l=0}^\infty\{\max_{2^l \leq n \leq 2^{l+1}} S_n \geq 2^l\epsilon_{2^l}\} )
            \\&\leq \sum_{l=0}^\infty \P\biggl[ \max_{1\leq n \leq 2^{l+1}} S_n \geq 2^l\epsilon_{2^l} \biggr]
            \\& \leq \sum_{l=0}^\infty e^{-u_{2^l}} = \sum_{l=0}^\infty (l+1)^{-2}\delta/4 \leq \delta/2,
        \end{align*}
        thus completes the proof.
    \end{proof}



\section{Missing Algorithms}
   \begin{algorithm}
        \caption{Vanilla Online Evaluation (Seq)}\label{alg:seq}
        \begin{algorithmic}[1]
            \Require The estimation error level $\epsilon$, failure probability $\delta$, and $P_{XY}$
            \For{Round $n=1,2,\dots$}
                \State Let $\epsilon_n \leftarrow \sqrt{\frac{2\ln(\log(n)+1)+\ln(4/\delta)}{n}}$
                \State Random sample $(X_n, Y_n)$ from $P_{XY}$ \Comment{When a test dataset $D$ is given instead of $P_{XY}$, sample the next point in $D$} 
                \State Evaluate the loss $Z_n \leftarrow \ell(f(X_n),Y_n)$
                \State Let $\hat{R}_n \leftarrow n^{-1}\sum_{i=1}^{n} Z_i$
                \If{$\epsilon_n \leq \epsilon$}
                    \State Terminate and return $\hat{R}_n, \epsilon_n$
                \EndIf
            \EndFor
            \Ensure The estimated loss $\hat{R}_n$, confidence interval radius $\epsilon_n$, and number of evaluated points $n$
        \end{algorithmic}
    \end{algorithm}    

\section{Experiment Details and Further Experiments}

\textbf{Compute Resources}

\textbf{Ablation study for the influence on embedding models}

We consider three embedding models: a pre-trained BERT, or text-embedding-3-large and text-embedding-ada-002 model from OpenAI.
With the same setting as the real-world experiments performed in \Autoref{sec:exp}, we report the saving ratios for evaluating four models on three datasets in \Autoref{fig:mmlu_ratio, fig:alpacaeval_ratio, fig:math_ratio}. 
Also, the empirical failure probability is zero for all settings. We find that using different embedding models leads to highly similar results. Nevertheless, it does not imply that embedding model is unimportant. In contrary, the efficiency of \ours heavily depends on the goodness of partition. This is confirmed by our simulation studies, where \ours performs better in the easy-to-distinguish scenario. For these datasets, we conjecture that the proposed partition method, \Autoref{alg:part}, does not extract enough information from the embedding vectors. We anticipate a higher saving ratio if more informative embeddings of the queries are available, or a more effective partition algorithm is deployed. One potential direction is to use the intrinsic attributes of queries as embedding vectors, such as the one used in \citep{polo2025tiny}. 

\begin{figure}[tb]
    \centering
    \includegraphics[width=0.9\linewidth]{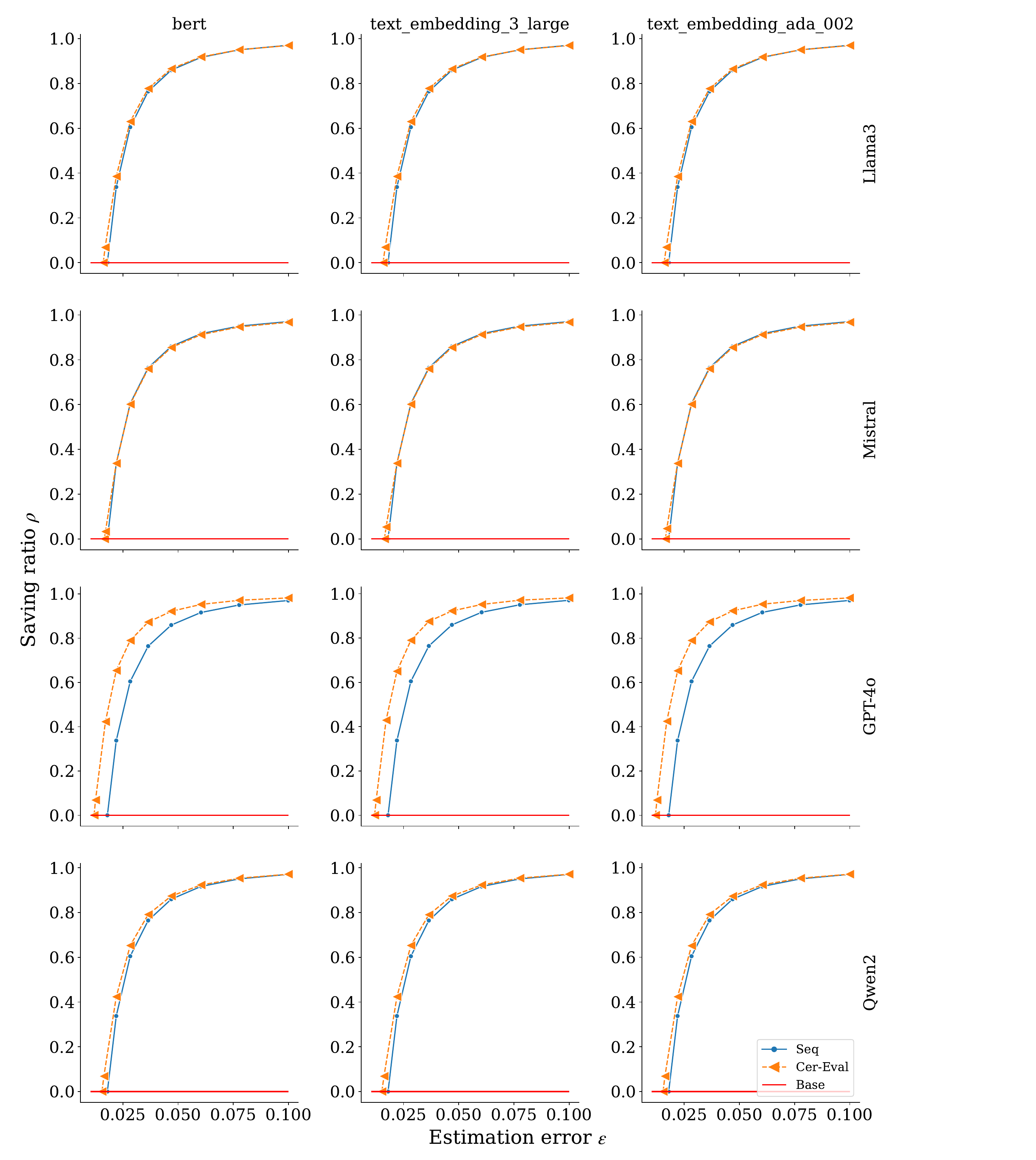}
    \caption{Percentage of test points saved by the proposed algorithms compared to Base when evaluating models on the MMLU dataset. }
    \label{fig:mmlu_ratio}
\end{figure}

\begin{figure}[tb]
    \centering
    \includegraphics[width=0.9\linewidth]{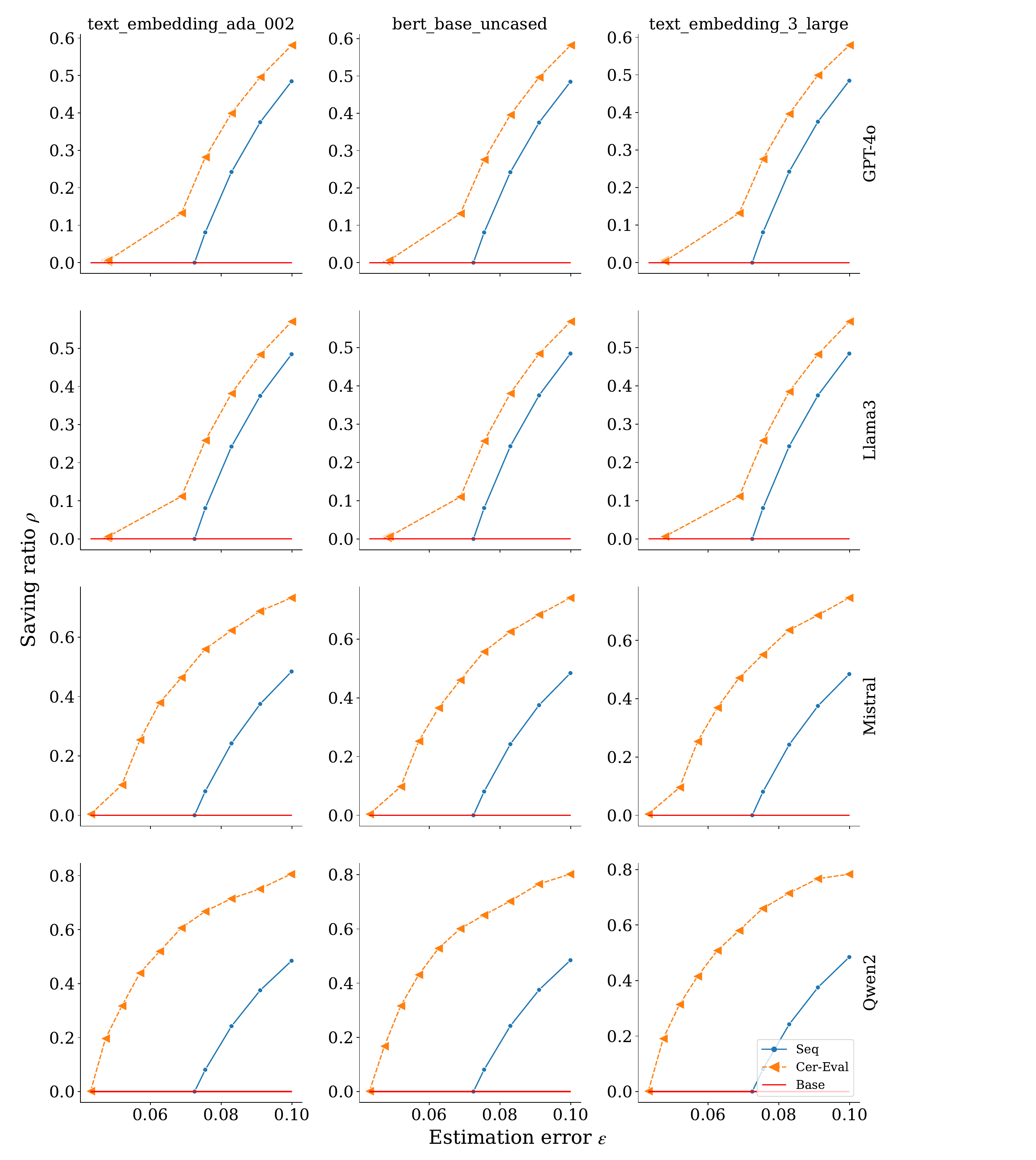}
    \caption{Percentage of test points saved by the proposed algorithms compared to Base when evaluating models on the AlpacaEval dataset. }
    \label{fig:alpacaeval_ratio}
\end{figure}

\begin{figure}[tb]
    \centering
    \includegraphics[width=0.9\linewidth]{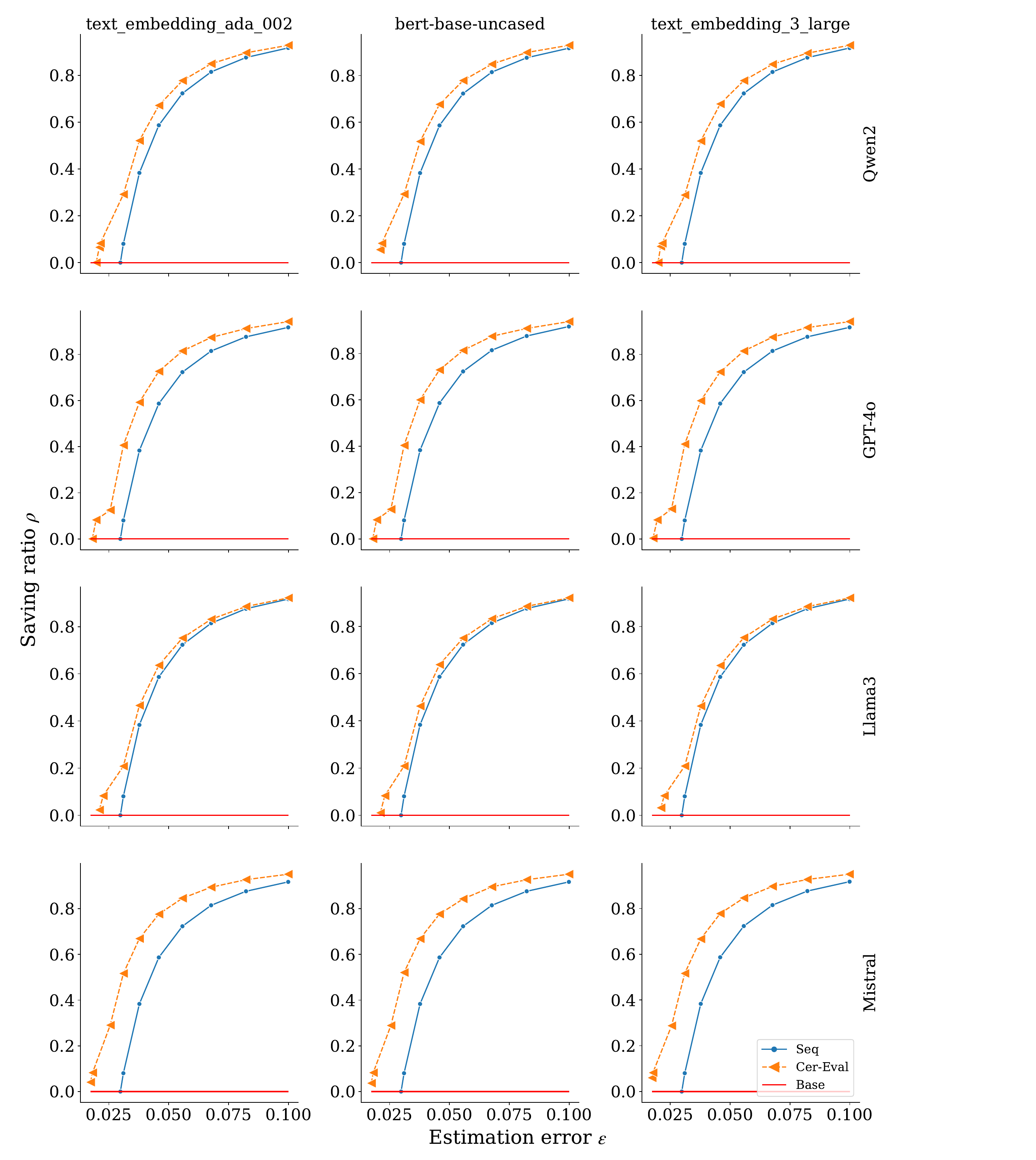}
    \caption{Percentage of test points saved by the proposed algorithms compared to Base when evaluating models on the MATH dataset. }
    \label{fig:math_ratio}
\end{figure}

\textbf{Additional Models to Verify Test Scaling Law}
    We plot the accuracy v.s. test sample size curve for over 200 models on MATH dataset, as shown in \Autoref{fig:test_scale}
    \begin{figure}[tb]
        \centering
        \includegraphics[width=0.5\linewidth]{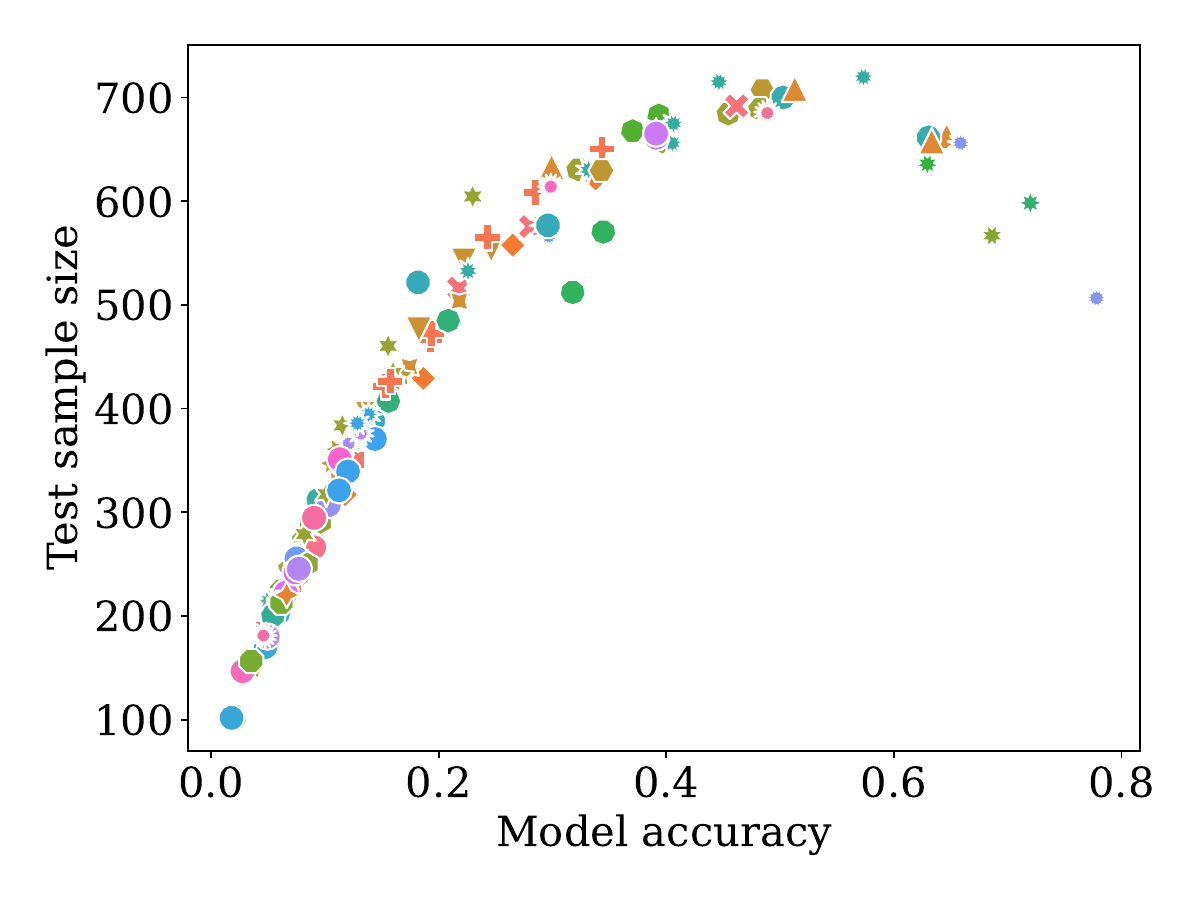}
        \caption{Number of needed test points v.s. model accuracy when evaluating models from multiple families using \ours, with estimation error level $\epsilon=0.07$ and failure probability $\delta=0.05$.}
        \label{fig:test_scale}
    \end{figure}

\section{Impact Statement}
    Our work enables a certifiable and efficient assessment of various aspects of LLMs, including their capabilities, robustness to adversarial prompts, and alignment with human values. 
    We therefore anticipate a positive societal impact, as timely and rigorous evaluation plays a crucial role in enhancing AI responsibility, mitigating potential risks, and ensuring that AI technologies align with ethical and safety standards.

 \FloatBarrier

\end{document}